\newtheorem{theorem}{Theorem}      
\newtheorem{remark}{Remark}  
\begin{document}
	
	\title{Adaptive Genetic Selection based Pinning Control with Asymmetric Coupling for Multi-Network Heterogeneous Vehicular Systems}
	
	\author{Weian Guo, Ruizhi Sha, Li Li, Lun Zhang and Dongyang Li
		\thanks{This work was supported in part by the National Natural Science Foundation of China under Grant 62273263, 72171172, 92367101 and 71771176; the Aeronautical Science Foundation of China under Grant 2023Z066038001; the National Natural Science Foundation of China Basic Science Research Center Program under Grant 62088101;Municipal Science and Technology Major Project (2022-5-YB-09); Natural Science Foundation of Shanghai under Grant Number 23ZR1465400. (Corresponding author: Li Li). Thanks for Mr. Wanli CAI for providing experiment environments to debug the codes.}
		\thanks{Weian Guo and Dongyang Li are with Sino-German College of Applied Sciences, Tongji University, Shanghai, China (email:\{guoweian, lidongyang0412\}@163.com). Lun Zhang is with School of Transportation, Tongji University, Shanghai, China (email: lun$\_$zhang@tongji.edu.cn).  Li Li and Ruizhi Sha are with the Department of Electronics and Information Sciences, Tongji University, Shanghai, China (email: \{lili, rzsha\}@tongji.edu.cn).}
	}
	
	\maketitle
	
	\begin{abstract}
		To alleviate computational load on RSUs and cloud platforms, reduce communication bandwidth requirements, and provide a more stable vehicular network service, this paper proposes an optimized pinning control approach for heterogeneous multi-network vehicular ad-hoc networks (VANETs). In such networks, vehicles participate in multiple task-specific networks with asymmetric coupling and dynamic topologies. We first establish a rigorous theoretical foundation by proving the stability of pinning control strategies under both single and multi-network conditions, deriving sufficient stability conditions using Lyapunov theory and linear matrix inequalities (LMIs). Building on this theoretical groundwork, we propose an adaptive genetic algorithm tailored to select optimal pinning nodes, effectively balancing LMI constraints while prioritizing overlapping nodes to enhance control efficiency. Extensive simulations across various network scales demonstrate that our approach achieves rapid consensus with a reduced number of control nodes, particularly when leveraging network overlaps. This work provides a comprehensive solution for efficient control node selection in complex vehicular networks, offering practical implications for deploying large-scale intelligent transportation systems.
	\end{abstract}	
	
	\begin{IEEEkeywords}
		Vehicular Ad-hoc Networks (VANETs), Pinning Control, Genetic Algorithm, Multi-Network Control, Asymmetric Coupling.
	\end{IEEEkeywords}
	\section{Introduction}
	\label{sec:intro}
	\IEEEPARstart{V}{ehicular} Ad-hoc Networks (VANETs) are foundational to intelligent transportation systems (ITS), providing a framework for real-time vehicle-to-vehicle (V2V) and vehicle-to-infrastructure (V2I) communication essential for traffic coordination and road safety \cite{pierpaolo_salvo_05076567, haixia_peng_7422716c}. As the scale and complexity of VANETs increase, so do the computational and communication demands on roadside units (RSUs) and cloud platforms, which serve as critical infrastructure for processing and distributing control information \cite{guo2025RSU}. To alleviate these demands and enhance VANET stability, efficient control methods that reduce reliance on extensive infrastructure are crucial for practical ITS deployment \cite{wenjun_xu_36c84a48}.
	
	In VANETs, achieving network controllability—where a subset of vehicles can steer the entire network to desired states—is essential for adaptive responses to dynamic traffic conditions and diverse coordination needs \cite{nelson_cardona_6a6a5a0f}. VANETs operate in diverse environments with varying network topologies tailored to specific communication and control demands. For instance, urban areas with dense intersections may use mesh or cluster-based topologies to handle high traffic volumes and complex road layouts, while highways might adopt chain topologies to support convoy formations. Furthermore, communication links in VANETs often exhibit asymmetric coupling, where leading vehicles influence the behavior of following vehicles, but the reverse influence is minimal or absent. In emergency scenarios, such as ambulances requesting right-of-way, asymmetric interactions become even more pronounced as critical messages broadcast without expecting reciprocal communication \cite{sok_ian_sou_fabc299a, hieu_nguyen_507b4c14, justin_j__boutilier_28ca4f44}. These factors necessitate specialized control strategies to ensure network cohesion and achieve control objectives effectively.
	
	\begin{figure*}[ht]
		\centering
		\includegraphics[width=\textwidth]{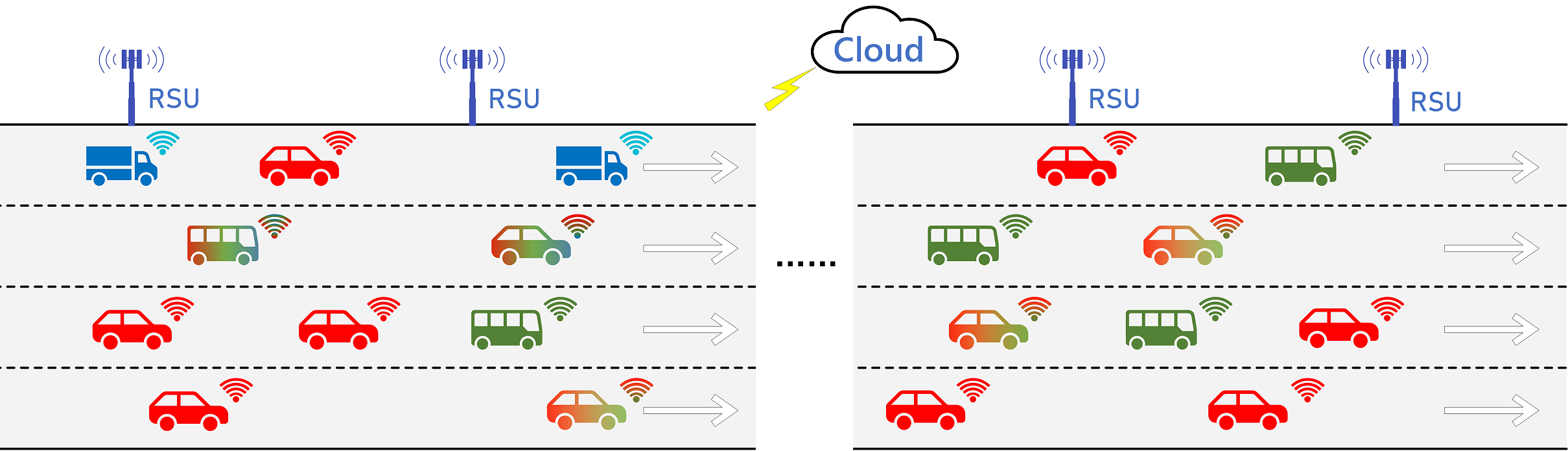}
		\caption{Illustration of a multi-network vehicular environment. The red icons represent private cars, green icons represent buses, and blue icons represent freight trucks. Multi-color vehicles simultaneously belong to different vehicular networks, serving as overlapping nodes with unique operational requirements and behaviors tailored to each network.}
		\label{fig:traffic_pin}
	\end{figure*}
	
	To meet these challenges, achieving network-wide stability by controlling a strategically selected subset of vehicles has emerged as a promising approach, effectively reducing reliance on RSUs and cloud computing resources \cite{milad_pooladsanj_b102bf01}. However, as illustrated in Fig. \ref{fig:traffic_pin}, VANETs often encompass multiple task-specific networks, each with distinct objectives, such as congestion management, collision avoidance, or coordinated route planning. Vehicles that participate in multiple networks—depicted by multi-colored icons—serve as overlapping nodes, introducing unique control opportunities and challenges. Leveraging these overlapping nodes within a pinning control framework can enhance control efficiency and reduce control costs, though it requires balancing the objectives of each network to prevent conflicts, particularly in safety-critical scenarios.
	
	Further complicating VANET control is the diversity of vehicle types, such as private cars, buses, and freight trucks, each with distinct operational requirements \cite{azzedine_boukerche_6fa03cb5}. For example, private cars often exhibit adaptive driving patterns, buses prioritize route consistency and passenger safety, and freight trucks frequently travel in convoys to optimize logistical efficiency. RSUs, positioned along the road, play a vital role in V2I communication by facilitating data exchange between vehicles and the cloud, which acts as a centralized platform for control and data processing. In infrastructure-limited scenarios, the deployment of a pinning control strategy becomes essential for achieving efficient, network-wide control, thereby reducing the computational load and dependency on cloud resources.
	
	In autonomous driving environments, the state demands of vehicles—such as speed, inter-vehicle distance, braking status, and desired trajectory—further complicate control node selection \cite{jinglai_shen_360e3888, kanika_sharma_5082d3c4}. Multi-colored vehicles in Fig. \ref{fig:traffic_pin} represent those with diverse state requirements, often forming sub-networks within the larger vehicular network. Applying direct control to each vehicle through RSUs and cloud computing would impose substantial computational and communication overhead. By strategically selecting and controlling specific vehicles (pinning control nodes), state influence can propagate through overlapping nodes, promoting consensus across sub-networks and achieving efficient network-wide control.
	
	This paper proposes an adaptive genetic algorithm-based pinning control framework aimed at enhancing network controllability in VANETs with asymmetric coupling and overlapping nodes. We establish a theoretical foundation for pinning control in both single and multi-network scenarios, proving the stability of the control strategy using Lyapunov theory and linear matrix inequalities (LMIs). Building on this analysis, we design a novel genetic algorithm to optimize the selection of control nodes, focusing on minimizing the number of nodes required while ensuring overall network stability. The proposed approach reduces computational and control load across the network, thus alleviating dependency on external infrastructure. The contributions of this paper are summarized as follows:
	\begin{itemize}
		\item We conduct a comprehensive controllability analysis, establishing theoretical conditions for pinning control stability in VANETs with asymmetric coupling and overlapping nodes, thereby supporting the practical deployment of robust ITS.
		\item We introduce an adaptive genetic algorithm to optimize control node selection, balancing LMI constraints and minimizing control nodes, effectively reducing control costs and infrastructure dependency.
		\item We perform extensive simulations across diverse network topologies and dynamic vehicular scenarios, demonstrating the efficiency and robustness of the proposed framework.
	\end{itemize}
	
	The remainder of this paper is organized as follows. Section~\ref{sec:related} reviews relevant work on VANET control and network topology management. Section~\ref{sec:model} presents the system model and outlines the genetic algorithm-based optimization approach. Section~\ref{sec:ga} details the proposed genetic selection strategy for control nodes. Section~\ref{sec:sim} discusses the simulation setup and results, with a focus on the impact of different network topologies on controllability. Finally, Section~\ref{sec:con} concludes the paper and suggests potential directions for future research.
	
	\section{Related Work}
	\label{sec:related}
	Vehicular Ad-hoc Networks (VANETs) have been the subject of extensive academic inquiry due to their potential in enabling intelligent transportation systems. However, the high mobility, frequent topology changes, and complex network interactions inherent in VANETs present significant challenges for stable and efficient network control. Research has been conducted on a variety of VANET-related topics, such as communication protocols, network topology management, control strategies, and optimization techniques for control node selection \cite{savas_konur_9b43c5bb, deepak_gupta_57289d84}.
	
	In this section, we provide an overview of the related work on VANET topologies, control methodologies, and optimization techniques for control node selection. To address these challenges, adaptive and decentralized control strategies have been proposed to maintain VANET stability and improve coordination \cite{nelson_cardona_6a6a5a0f, chander_bhanu_8bed2b88, deepak_gupta_57289d84, sergio_luis_o__b__correia_e992a3a9}. Consensus-based control approaches have been extensively studied for achieving cooperative behavior among vehicles. These methods aim to synchronize the state of vehicles, such as speed and acceleration, to ensure smooth traffic flow and prevent collisions \cite{ananto_tri_sasongko_9f7f9e15}. Achieving consensus in VANET environments involves various strategies such as event-triggered consensus and distributed model predictive control \cite{vanshri_deshpande_3d4579cb, stefania_santini_ec928cea, savas_konur_9b43c5bb}. However, achieving consensus in dynamic VANET environments is challenging due to the unpredictable nature of network connectivity and the asymmetric interactions among vehicles \cite{nelson_cardona_6a6a5a0f, chander_bhanu_8bed2b88}. Recent research has introduced adaptive consensus protocols that consider time-varying topology changes, while others have proposed reinforcement learning-based approaches to improve consensus convergence in highly dynamic scenarios \cite{arunselvan_ramaswamy_eca2d453, maryam_sharifi_0d4ad56b, morteza_mirzaei_7b1ad9b7}. Despite their effectiveness, these methods often require the entire network to participate in control, which can be resource-intensive. 
	
	Pinning control, a specialized form of consensus control, has been introduced to reduce the control effort by applying control inputs to a selected subset of nodes \cite{cameron_nowzari_66429cd7, wei_liu_37ced799}. Pinning control has demonstrated significant potential in enhancing network controllability, particularly in large-scale multi-agent systems \cite{gang_chen_6c711c7e, tong_zhou_05fff1c1}. In the context of VANETs, pinning control can help reduce resource consumption while maintaining system stability, especially in the face of dynamic topologies \cite{gang_chen_6c711c7e}. Existing studies have explored the impact of pinning node selection on the overall controllability and robustness of the network \cite{deepak_gupta_57289d84}. Furthermore, adaptive pinning strategies have been developed to adjust the control nodes in response to changes in the network topology, ensuring better resilience against link failures \cite{xiaofan_wang_05024a6b}. However, the applicability of pinning control in VANETs, particularly under real-world constraints such as high mobility and frequent disconnections, remains underexplored. 
	
	Optimization of control node selection is another area that has received considerable attention. The concept of using leader-follower models in VANETs has been explored to minimize control efforts by selecting a subset of vehicles as leaders while ensuring that the entire system remains controllable \cite{rusheng_zhang_049f65c4, christy_jackson_joshua_78aaf324, ahmad_mostafa_faf4e367}. Various methods have been employed for leader selection, including linear programming, heuristic algorithms, and metaheuristic approaches \cite{rusheng_zhang_049f65c4, rasmeet_singh_bali_d6e9b9c4}. Genetic algorithms (GAs) have shown promise due to their ability to handle complex, non-linear optimization problems and explore a large solution space efficiently. Recent advancements include hybrid genetic algorithms that combine GAs with other optimization techniques to improve performance \cite{adam_s_owik_761f3fb4}. Additionally, particle swarm optimization (PSO) and ant colony optimization (ACO) have been applied to leader selection problems in VANETs with promising results \cite{christy_jackson_joshua_78aaf324, forough_goudarzi_a6e18e54, bernab__dorronsoro_85077f67}. However, most existing optimization approaches do not explicitly consider the dynamic nature of VANETs, such as frequent topology changes and asymmetric coupling, which limits their applicability in real-world scenarios. Recent work has begun to address these limitations by incorporating adaptive optimization frameworks that respond to changes in network conditions, ensuring more robust control performance \cite{shankar_a__deka_e25bb81a, linghe_kong_d528404c}.
	
	Although considerable progress has been made in VANET control strategies, existing studies have not fully addressed the challenges posed by high mobility and frequent disconnections inherent in VANETs. Furthermore, issues such as asymmetric coupling between vehicle nodes and the extensive network-wide participation required by many control strategies are often overlooked \cite{saleh_yousefi_94da103c, jitendra_bhatia_27792768}. These limitations hinder the applicability of many control and optimization techniques in real-world VANET scenarios, where dynamic topology changes and varying network conditions are prevalent \cite{tanjida_kabir_b662b9e2, xin_yang_1152c802, hamid_barkouk_4fe2b61d}.
	
	In this paper, we build upon these prior works by proposing an adaptive genetic algorithm for optimizing control node selection in VANETs. Unlike previous studies that focus on static topologies or simplified network conditions, our approach explicitly addresses dynamic topologies and asymmetric coupling, ensuring system controllability under realistic VANET conditions. By applying control inputs to a selected subset of nodes through pinning control, our method reduces overall control effort and resource consumption while maintaining system stability. Additionally, we perform a detailed controllability analysis to validate the effectiveness of our control strategies under the complex, dynamic conditions typical of VANETs.

	\section{Problem Description and Analysis}
	\label{sec:model}	
	\subsection{Pinning Consensus Problem in a Single Vehicular Network System}
	
	In this subsection, we study the pinning consensus problem in a first-order continuous-time vehicular ad-hoc network (VANET) system using adaptive and pinning control strategies. We analyze the conditions necessary for system convergence and propose corresponding pinning control methods.
	
	Consider a continuous-time vehicular network system consisting of $N$ vehicles. The state of each vehicle is described by physical quantities such as position and velocity. The dynamic equation for each vehicle is:
	\begin{equation}
		\label{eqn:dynamics}
		\dot{x}_i(t) = C \sum_{j=1}^N G_{ij} \Gamma [x_j(t) - x_i(t)] - c d_i \Gamma [x_i(t) - x^*],
	\end{equation}
	where:
	\begin{itemize}
		\item $x_i(t) \in \mathbb{R}^m$ represents the state vector of vehicle $i$ at time $t$, including position, velocity, etc.;
		\item $C > 0$ is the coupling strength that determines the influence between connected vehicles;
		\item $\Gamma \in \mathbb{R}^{m \times m}$ is the internal coupling matrix, satisfying $\Gamma \succ 0$ (i.e., $\Gamma$ is positive definite);
		\item $G_{ij}$ are the elements of the asymmetric adjacency matrix $G \in \mathbb{R}^{N \times N}$, describing the communication links between vehicles;
		\item $c > 0$ is the pinning control gain constant that adjusts the strength of the pinning control;
		\item $d_i \in \{0, 1\}$ indicates whether vehicle $i$ is pinned ($d_i = 1$);
		\item $x^* \in \mathbb{R}^m$ is the target consensus state to which all vehicles should converge.
	\end{itemize}
	
	To simplify the analysis and focus on the core aspects of pinning control, we consider the case without communication delays, i.e., $\tau_{ij} = 0$. The system dynamics then reduce to:
	\begin{equation}
		\label{eqn:base_model}
		\dot{x}_i(t) = C \sum_{j=1}^N G_{ij} \Gamma [x_j(t) - x_i(t)] - c d_i \Gamma [x_i(t) - x^*].
	\end{equation}
	
	Define the error:
	\begin{equation}
		\label{eqn:error}
		e_i(t) = x_i(t) - x^*, \quad i = 1, 2, \dots, N.
	\end{equation}
	The condition for achieving pinning consensus is $\lim_{t \to \infty} \|e_i(t)\| = 0$ for all $i$. Substituting \eqref{eqn:error} into \eqref{eqn:base_model}, we obtain:
	\begin{equation}
		\label{eqn:error_full}
		\dot{e}_i(t) = C \sum_{j=1}^N G_{ij} \Gamma [e_j(t) - e_i(t)] - c d_i \Gamma e_i(t).
	\end{equation}
	
	Using the Laplacian matrix $L$, we define the degree matrix $D$, whose diagonal elements are $D_{ii} = \sum_{j=1}^N G_{ij}$. The Laplacian matrix $L$ is defined as:
	\begin{equation}
		\label{eqn:Laplacian_matrix}
		L_{ij} = 
		\begin{cases} 
			D_{ii}, & \text{if } i = j, \\
			- G_{ij}, & \text{if } i \neq j.
		\end{cases}
	\end{equation}
	
	Thus, \eqref{eqn:error_full} can be rewritten as:
	\begin{equation}
		\label{eqn:error_variants}
		\dot{e}_i(t) = -C \sum_{j=1}^N L_{ij} \Gamma e_j(t) - c d_i \Gamma e_i(t).
	\end{equation}
	
	Expressing the above equation in matrix form:
	\begin{equation}
		\label{eqn:error_matrix}
		\dot{e}(t) = -C (L \otimes \Gamma) e(t) - c (\hat{D} \otimes \Gamma) e(t),
	\end{equation}
	where $e(t) = [e_1^\mathrm{T}(t), e_2^\mathrm{T}(t), \dots, e_N^\mathrm{T}(t)]^\mathrm{T} \in \mathbb{R}^{Nm}$, and $\hat{D} = \operatorname{diag}(d_1, d_2, \dots, d_N)$.
	
	In VANETs, it is common to achieve pinning consensus through a properly designed control strategy, so that all vehicles converge to a common target state:
	\[
	\lim_{t \to \infty} \| x_i(t) - x^* \| = \lim_{t \to \infty} \| e_i(t) \| = 0, \quad i = 1, 2, \dots, N.
	\]
	
	\begin{theorem}
		\label{theorem:pinning_consensus}
		Consider the continuous-time vehicular network system \eqref{eqn:base_model} consisting of $N$ vehicles, where the adjacency matrix $G$ leads to a Laplacian matrix $L$ that is stable (i.e., all non-zero eigenvalues of $L$ have positive real parts). Let $Q = q I_m \succ 0$, where $q > 0$, and $\Gamma = I_m$. If there exists a sufficiently large constant $c > 0$ such that the following matrix inequality holds:
		\[
		2C L_s \otimes Q + 2c \hat{D} \otimes Q \geq \delta (I_N \otimes I_m),
		\]
		where $L_s = \frac{L + L^\mathrm{T}}{2}$ is the symmetric part of the Laplacian matrix $L$, and $\delta > 0$ is a given constant, then the system \eqref{eqn:base_model} can asymptotically achieve pinning consensus, i.e.,
		\[
		\lim_{t \to \infty} \| x_i(t) - x^* \| = 0, \quad i = 1, 2, \dots, N.
		\]
	\end{theorem}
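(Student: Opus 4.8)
The plan is to establish asymptotic stability of the error system \eqref{eqn:error_matrix} by means of a quadratic Lyapunov function whose derivative is controlled precisely by the matrix inequality in the hypothesis. Since $\Gamma = I_m$, the error dynamics \eqref{eqn:error_matrix} collapse to $\dot{e}(t) = -M e(t)$ with $M = C(L \otimes I_m) + c(\hat{D} \otimes I_m)$. I would take as the Lyapunov candidate
\[
V(t) = e^\mathrm{T}(t)\,(I_N \otimes Q)\,e(t),
\]
which is positive definite because $Q = qI_m \succ 0$ forces $I_N \otimes Q = q I_{Nm} \succ 0$; thus $V(t) > 0$ for $e(t) \neq 0$ and $V(t) = 0$ only on the consensus manifold $e = 0$. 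In fact $V(t) = q\,\|e(t)\|^2$, a relation I will exploit at the end.

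Next I would differentiate $V$ along trajectories. Substituting $\dot{e} = -Me$ yields
\[
\dot{V}(t) = -e^\mathrm{T}\bigl[M^\mathrm{T}(I_N \otimes Q) + (I_N \otimes Q)M\bigr]e.
\]
The crucial step is to expand this bracket using the Kronecker mixed-product identity $(A \otimes B)(C \otimes D) = (AC) \otimes (BD)$ together with the symmetry of the diagonal pinning matrix $\hat{D}$. The coupling part contributes both $L \otimes Q$ and $L^\mathrm{T} \otimes Q$, whose sum is $(L + L^\mathrm{T}) \otimes Q = 2 L_s \otimes Q$, while the pinning part doubles cleanly to $2c\,\hat{D} \otimes Q$. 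This computation is exactly the mechanism by which the \emph{asymmetric} Laplacian $L$ is legitimately replaced by its symmetric part $L_s = \tfrac{1}{2}(L + L^\mathrm{T})$ inside the quadratic form, giving
\[
\dot{V}(t) = -e^\mathrm{T}\bigl[2C\,L_s \otimes Q + 2c\,\hat{D} \otimes Q\bigr]e.
\]

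The remaining step is to invoke the hypothesis. By the assumed inequality $2C L_s \otimes Q + 2c\hat{D} \otimes Q \geq \delta(I_N \otimes I_m)$, I would conclude
\[
\dot{V}(t) \leq -\delta\,e^\mathrm{T}(I_N \otimes I_m)e = -\delta\,\|e(t)\|^2 < 0 \quad (e \neq 0).
\]
Using $V(t) = q\|e(t)\|^2$, this bound becomes $\dot{V} \leq -(\delta/q)V$, so Gr\"onwall's inequality gives $V(t) \leq V(0)\,e^{-(\delta/q)t} \to 0$; hence $\|e(t)\| \to 0$ and $x_i(t) \to x^*$ for every $i$, which is the claim.

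The main obstacle is not the concluding inequality, which follows mechanically once the hypothesis is in hand, but rather the careful bookkeeping in the symmetrization step: one must track the Kronecker products and the roles of $L$, $L^\mathrm{T}$, and $\hat{D}$ precisely to be sure the asymmetric coupling truly reduces to $L_s$ and that no cross terms are lost. A secondary point worth flagging is that the theorem takes the satisfiability of the matrix inequality as a standing assumption; the deeper content—showing that the stability of $L$ (all nonzero eigenvalues with positive real part) together with a sufficiently large pinning gain $c$ on an appropriately chosen $\hat{D}$ actually makes $2C L_s + 2c\hat{D}$ positive definite—is what justifies the phrase ``sufficiently large constant $c$'' and would be the natural place to concentrate effort in a self-contained treatment.
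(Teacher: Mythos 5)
Your proposal is correct and follows essentially the same route as the paper: the Lyapunov function $V(t)=e^\mathrm{T}(t)(I_N\otimes Q)e(t)$, symmetrization of the quadratic form so that only $L_s$ survives (the paper phrases this as the vanishing of the skew-symmetric part $e^\mathrm{T}(L_a\otimes Q)e=0$, which is the same computation as your $M^\mathrm{T}P+PM$ expansion), and then the hypothesized matrix inequality to obtain $\dot V\le-\delta\|e\|^2$. Your closing Gr\"onwall step even gives explicit exponential decay, slightly sharper than the paper's bare appeal to Lyapunov's theorem, and your final remark correctly identifies that both you and the paper leave the existence of a feasible $c$ as a standing assumption.
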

	
	\begin{proof}
		Our goal is to prove that under the given conditions, the error vector $e(t) \in \mathbb{R}^{Nm}$, where $e_i(t) = x_i(t) - x^*$, converges to zero as $t \to \infty$. The error dynamics are:
		\begin{equation}
			\label{eqn:error_dynamics}
			\dot{e}(t) = -C (L \otimes \Gamma) e(t) - c (\hat{D} \otimes \Gamma) e(t).
		\end{equation}
		
		Since $L$ and $G$ may be asymmetric matrices, to apply Lyapunov methods, we decompose $L$ into its symmetric and skew-symmetric parts:
		\[
		L = L_s + L_a,
		\]
		where
		\[
		L_s = \frac{L + L^\mathrm{T}}{2}, \quad L_a = \frac{L - L^\mathrm{T}}{2}.
		\]
		
		Choose the Lyapunov function:
		\[
		V(t) = e^\mathrm{T}(t) (I_N \otimes Q) e(t),
		\]
		where $Q = q I_m \succ 0$. Compute the derivative of $V(t)$:
		\begin{equation}
			\label{eqn:V_derivative}
			\dot{V}(t) = 2 e^\mathrm{T}(t) (I_N \otimes Q) \dot{e}(t).
		\end{equation}
		
		Substituting \eqref{eqn:error_dynamics} into \eqref{eqn:V_derivative}, we obtain:
		\[
		\dot{V}(t) = -2C e^\mathrm{T}(t) (L \otimes Q \Gamma) e(t) - 2c e^\mathrm{T}(t) (\hat{D} \otimes Q \Gamma) e(t).
		\]
		
		Since $\Gamma = I_m$ and $Q$ is a scalar matrix, $Q \Gamma = Q$. Noting that $L_a$ is skew-symmetric and $Q$ is symmetric, we have:
		\[
		e^\mathrm{T}(t) (L_a \otimes Q) e(t) = 0.
		\]
		Therefore,
		\[
		\dot{V}(t) = -2C e^\mathrm{T}(t) (L_s \otimes Q) e(t) - 2c e^\mathrm{T}(t) (\hat{D} \otimes Q) e(t).
		\]
		
		Since $L_s$ is positive semi-definite (because the non-zero eigenvalues of $L$ have positive real parts), and $\hat{D}$ is a non-negative diagonal matrix, using the matrix inequality condition:
		\[
		2C L_s \otimes Q + 2c \hat{D} \otimes Q \geq \delta (I_N \otimes I_m),
		\]
		we obtain:
		\[
		\dot{V}(t) \leq -\delta e^\mathrm{T}(t) e(t) = -\delta \| e(t) \|^2.
		\]
		
		Therefore, $\dot{V}(t)$ is negative definite, and $V(t)$ is positive definite. By Lyapunov's stability theorem, the error vector $e(t)$ asymptotically converges to zero:
		\[
		\lim_{t \to \infty} e(t) = 0.
		\]
		Thus, the system achieves pinning consensus.
	\end{proof}
	
	\begin{remark}
		The control gain $c$ can be effectively determined using Linear Matrix Inequality (LMI) methods. If the LMI has no solution, one can increase the number of non-zero elements in the pinning matrix $\hat{D}$ to enhance pinning control, thereby obtaining greater control authority to satisfy the LMI. In the worst case, setting all vehicles as pinning nodes can achieve consensus.
	\end{remark}
	
	\subsection{Pinning Consensus in Multi-Network Heterogeneous Vehicular Systems}
	
	In this subsection, we extend the pinning consensus problem to a multi-network heterogeneous vehicular system with overlapping nodes and different target states. Each vehicular network may have its own dynamics and desired consensus state, and some vehicles may belong to multiple networks simultaneously.
	
	\subsubsection{System Model}
	
	Consider $K$ continuous-time vehicular networks, each denoted as $\mathcal{G}^{(k)} = (V^{(k)}, E^{(k)})$, where $k = 1, 2, \dots, K$. $V^{(k)}$ is the set of vehicles in network $k$, and $E^{(k)}$ is the set of communication links in network $k$. The overall set of vehicles is $V = \bigcup_{k=1}^K V^{(k)}$, and the set of overlapping vehicles is $V_{\text{overlap}} = \{ i \in V \mid \exists k_1 \neq k_2, i \in V^{(k_1)} \cap V^{(k_2)} \}$.
	
	Each vehicle $i \in V$ has a state vector $x_i(t) \in \mathbb{R}^m$, which is shared among all networks it belongs to, simulating the practical scenario where a vehicle's physical state (e.g., position, velocity) is shared in different network applications. The dynamics of vehicle $i$ are:
	\begin{equation}
		\label{eqn:multi_dynamics}
		\begin{aligned}
			\dot{x}_i(t) = \sum_{k \in \mathcal{K}_i} \Bigg[ & C^{(k)} \sum_{j \in \mathcal{N}_i^{(k)}} G_{ij}^{(k)} \Gamma^{(k)} [x_j(t) - x_i(t)] \\
			& - c^{(k)} d_i^{(k)} \Gamma^{(k)} [x_i(t) - x^{*(k)}] \Bigg],
		\end{aligned}
	\end{equation}
	where:
	\begin{itemize}
		\item $\mathcal{K}_i$ is the set of networks that vehicle $i$ belongs to;
		\item $\mathcal{N}_i^{(k)}$ is the set of neighbors of vehicle $i$ in network $k$;
		\item $C^{(k)} > 0$ is the coupling strength in network $k$;
		\item $\Gamma^{(k)} \in \mathbb{R}^{m \times m}$ is the internal coupling matrix for network $k$, satisfying $\Gamma^{(k)} \succ 0$;
		\item $G_{ij}^{(k)}$ are the elements of the adjacency matrix $G^{(k)}$ for network $k$;
		\item $c^{(k)} > 0$ is the pinning control gain for network $k$;
		\item $d_i^{(k)} \in \{0, 1\}$ indicates whether vehicle $i$ is pinned in network $k$;
		\item $x^{*(k)} \in \mathbb{R}^{m}$ is the target consensus state for network $k$.
	\end{itemize}
	
	\subsubsection{Problem Statement}
	
	Our objective is to design pinning control strategies $d_i^{(k)}$ and control gains $c^{(k)}$ such that each network converges to its respective target state $x^{*(k)}$, while minimizing the total number of pinned nodes, especially favoring overlapping nodes as pinning nodes.
	
	\subsubsection{Assumptions}
	
	To ensure the feasibility of the problem, we make the following assumptions:
	
	\begin{enumerate}
		\item The target states of different networks are consistent or can be aggregated compatibly at overlapping nodes, so that overlapping nodes can converge to a composite target state without conflict. Specifically, for overlapping nodes, we define a composite target state:
		\begin{equation}
			\label{eqn:composite_target}
			x_i^* = \frac{1}{|\mathcal{K}_i|} \sum_{k \in \mathcal{K}_i} x^{*(k)}, \quad \forall i \in V_{\text{overlap}}.
		\end{equation}
		\item The internal coupling matrices $\Gamma^{(k)}$ are identical across networks, or for simplicity, are multiples of the identity matrix, i.e., $\Gamma^{(k)} = \gamma^{(k)} I_m$, where $\gamma^{(k)} > 0$.
		\item The control gains $c^{(k)}$ can be appropriately chosen to balance the influence of different networks on overlapping nodes.
	\end{enumerate}
	
	\begin{theorem}
		\label{theorem:multi_pinning_consensus_modified}
		Consider the multi-network heterogeneous vehicular system described above. Suppose the internal coupling matrices are $\Gamma^{(k)} = \gamma^{(k)} I_m$, where $\gamma^{(k)} > 0$. Let $Q = q I_m \succ 0$, where $q > 0$. If there exist sufficiently large constants $c^{(k)} > 0$ and $\delta > 0$ such that the following matrix inequality holds:
		\begin{equation}
			\label{eqn:matrix_inequality}
			2 \sum_{k=1}^K \left[ C^{(k)} \gamma^{(k)} L_s^{(k)} \otimes Q + c^{(k)} \gamma^{(k)} \hat{D}^{(k)} \otimes Q \right] \geq \delta (I_N \otimes I_m),
		\end{equation}
		where $L_s^{(k)} = \frac{L^{(k)} + [L^{(k)}]^\mathrm{T}}{2}$ is the symmetric part of the Laplacian matrix $L^{(k)}$, and $\hat{D}^{(k)} = \operatorname{diag}(d_1^{(k)}, d_2^{(k)}, \dots, d_N^{(k)})$, then the system \eqref{eqn:multi_dynamics} can asymptotically achieve pinning consensus for all $i \in V$, i.e.,
		\begin{equation}
			\label{eqn:consensus_result}
			\lim_{t \to \infty} \| x_i(t) - x_i^* \| = 0, \quad \forall i \in V.
		\end{equation}
	\end{theorem}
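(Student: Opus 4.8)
The plan is to mirror the Lyapunov argument of Theorem~\ref{theorem:pinning_consensus}, but first I must resolve the genuinely new feature of the multi-network setting: each vehicle is driven toward several (possibly distinct) targets $x^{*(k)}$ simultaneously, and the natural error coordinate is the composite target $x_i^*$ rather than a single global $x^*$. So I would begin by defining $e_i(t) = x_i(t) - x_i^*$ and substituting $x_i = e_i + x_i^*$ into the dynamics \eqref{eqn:multi_dynamics}. Writing $x_j - x_i = (e_j - e_i) + (x_j^* - x_i^*)$ in the coupling term and $x_i - x^{*(k)} = e_i + (x_i^* - x^{*(k)})$ in the pinning term, the error dynamics split into a homogeneous linear part (identical in form to \eqref{eqn:error_dynamics}, now summed over $k$) plus a residual drift generated by the differences $x_j^* - x_i^*$ and $x_i^* - x^{*(k)}$.

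The key step, and the one I expect to be the main obstacle, is showing this residual drift vanishes so that the error system is genuinely autonomous and linear; this is exactly the role of Assumptions~1 and~3. Using $\Gamma^{(k)} = \gamma^{(k)} I_m$ and the composite target \eqref{eqn:composite_target}, the pinning residual at an overlapping node is $\sum_{k\in\mathcal{K}_i} c^{(k)}\gamma^{(k)} d_i^{(k)} (x_i^* - x^{*(k)})$; since $x_i^*$ is the mean of the $x^{*(k)}$ over $\mathcal{K}_i$, a balanced choice of the gains $c^{(k)}$ (Assumption~3) forces $\sum_{k}(x_i^* - x^{*(k)}) = 0$ and cancels this term, while the target compatibility of Assumption~1 ensures the coupling residual $\sum_k C^{(k)}\gamma^{(k)} \sum_j G_{ij}^{(k)}(x_j^* - x_i^*)$ contributes nothing. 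For non-overlapping nodes $|\mathcal{K}_i| = 1$, so $x_i^* = x^{*(k)}$ and both residuals are trivially zero. This reduces the dynamics to
\[
\dot{e}(t) = -\sum_{k=1}^K C^{(k)}\gamma^{(k)} (L^{(k)} \otimes I_m)e(t) - \sum_{k=1}^K c^{(k)}\gamma^{(k)}(\hat{D}^{(k)} \otimes I_m)e(t).
\]

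From here the argument is a direct lift of Theorem~\ref{theorem:pinning_consensus}. I would take $V(t) = e^\mathrm{T}(t)(I_N \otimes Q)e(t)$, differentiate along the trajectories, and exploit that $Q = qI_m$ is a symmetric scalar matrix, so each skew-symmetric part $L_a^{(k)}$ satisfies $e^\mathrm{T}(t)(L_a^{(k)} \otimes Q)e(t) = 0$, allowing every $L^{(k)}$ to be replaced by its symmetric part $L_s^{(k)}$. Collecting terms gives
\[
\dot{V}(t) = -e^\mathrm{T}(t)\Big(2\sum_{k=1}^K \big[C^{(k)}\gamma^{(k)} L_s^{(k)} \otimes Q + c^{(k)}\gamma^{(k)} \hat{D}^{(k)} \otimes Q\big]\Big)e(t),
\]
and the hypothesized matrix inequality \eqref{eqn:matrix_inequality} immediately yields $\dot{V}(t) \leq -\delta\|e(t)\|^2$. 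Since $V$ is positive definite and $\dot{V}$ negative definite, Lyapunov's theorem delivers $e(t) \to 0$, which is precisely the consensus claim \eqref{eqn:consensus_result}. The only place that demands genuine care is the drift cancellation under Assumptions~1 and~3; once the error system is autonomous, the Kronecker-product and skew-symmetry bookkeeping is entirely routine.
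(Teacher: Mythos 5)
Your Lyapunov machinery --- the quadratic form $V=e^{\mathrm T}(I_N\otimes Q)e$, the symmetric/skew-symmetric split of each $L^{(k)}$, the vanishing of $e^{\mathrm T}(L_a^{(k)}\otimes Q)e$, and the final appeal to \eqref{eqn:matrix_inequality} --- is exactly the paper's proof. Where you differ is that you explicitly confront the residual drift created by substituting $x_i=e_i+x_i^*$ into \eqref{eqn:multi_dynamics}, which the paper silently suppresses: its error equation \eqref{eqn:error_dynamics_initial} is written as if the target terms contribute nothing. You were right to flag this as the real obstacle, but your cancellation argument does not close it. The pinning residual at an overlapping node is $\sum_{k\in\mathcal{K}_i}c^{(k)}\gamma^{(k)}d_i^{(k)}\bigl(x_i^*-x^{*(k)}\bigr)$, a \emph{weighted} sum; the fact that the unweighted sum $\sum_{k\in\mathcal{K}_i}(x_i^*-x^{*(k)})$ vanishes by \eqref{eqn:composite_target} tells you nothing unless the weights $c^{(k)}\gamma^{(k)}d_i^{(k)}$ are constant over $k\in\mathcal{K}_i$ --- and they cannot be in general, because the indicators $d_i^{(k)}$ are precisely the binary decision variables the paper is optimizing, so a node may be pinned in one of its networks and not another. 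Assumption~3 (``gains can be appropriately chosen to balance the influence'') is too vague to force equal weights, and even if it did, a node pinned in only one of its two networks still leaves a nonzero residual. Likewise the coupling residual $\sum_{j}G_{ij}^{(k)}(x_j^*-x_i^*)$ vanishes only if every neighbor $j$ shares node $i$'s composite target, which fails whenever neighbors belong to different collections of networks; Assumption~1's ``compatibility'' does not supply this.

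The upshot is that the clean autonomous error system, and hence the theorem as stated, only follows when all the $x^{*(k)}$ coincide (so every $x_i^*$ is identical and both residuals are identically zero), or when one treats the drift as a bounded forcing term and downgrades the conclusion from asymptotic convergence to ultimate boundedness of the error. This is a gap in the paper as much as in your proposal --- the paper never actually derives \eqref{eqn:error_dynamics_initial} from \eqref{eqn:multi_dynamics}, it just asserts it --- but since you chose to make the cancellation explicit, you own the obligation to prove it, and the argument you sketch does not.
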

	
	\begin{proof}
		Our goal is to show that under the given conditions, the error vector $e(t) = x(t) - x^*(t)$ satisfies $\lim_{t \to \infty} e(t) = 0$, where $x(t) = [x_1^\mathrm{T}(t), x_2^\mathrm{T}(t), \dots, x_N^\mathrm{T}(t)]^\mathrm{T}$, and $x^*(t) = [x_1^{*\mathrm{T}}(t), x_2^{*\mathrm{T}}(t), \dots, x_N^{*\mathrm{T}}(t)]^\mathrm{T}$.
		
		For each node $i \in V$, define the error:
		\begin{equation}
			\label{eqn:total_error}
			e_i(t) = x_i(t) - x_i^*(t).
		\end{equation}
		
		Substituting the error definition into the system dynamics \eqref{eqn:multi_dynamics}, we obtain the error dynamics:
		\begin{equation}
			\begin{aligned}
				\label{eqn:error_dynamics_initial}
				\dot{e}_i(t) = \sum_{k \in \mathcal{K}_i} \Bigg[ & -C^{(k)} \gamma^{(k)} \sum_{j \in \mathcal{N}_i^{(k)}} L_{ij}^{(k)} e_j(t) \\
				& - c^{(k)} \gamma^{(k)} d_i^{(k)} e_i(t) \Bigg],
			\end{aligned}
		\end{equation}
		where $L_{ij}^{(k)}$ are the elements of the Laplacian matrix $L^{(k)}$ for network $k$.
		
		Combine the error vectors of all nodes into a global error vector:
		\begin{equation}
			\label{eqn:global_error}
			e(t) = [e_1^\mathrm{T}(t), e_2^\mathrm{T}(t), \dots, e_N^\mathrm{T}(t)]^\mathrm{T}.
		\end{equation}
		
		Using the Kronecker product and properties of the Laplacian matrices, the global error dynamics can be expressed as:
		\begin{equation}
			\begin{aligned}
				\label{eqn:error_dynamics_matrix}
				\dot{e}(t) = -\sum_{k=1}^K \Big[ & C^{(k)} \gamma^{(k)} (L^{(k)} \otimes I_m) \\
				& + c^{(k)} \gamma^{(k)} (\hat{D}^{(k)} \otimes I_m) \Big] e(t).
			\end{aligned}
		\end{equation}
		
		Choose the Lyapunov function:
		\begin{equation}
			\label{eqn:lyapunov_function}
			V(t) = e^\mathrm{T}(t) (I_N \otimes Q) e(t),
		\end{equation}
		where $Q = q I_m \succ 0$.
		
		Compute the derivative of $V(t)$:
		\begin{equation}
			\label{eqn:lyapunov_derivative}
			\dot{V}(t) = 2 e^\mathrm{T}(t) (I_N \otimes Q) \dot{e}(t).
		\end{equation}
		
		Substituting the error dynamics from \eqref{eqn:error_dynamics_matrix} into \eqref{eqn:lyapunov_derivative}, we have:
		\begin{equation}
		\begin{aligned}
			\label{eqn:lyapunov_derivative_substituted}
			\dot{V}(t) = -2 e^\mathrm{T}(t) \sum_{k=1}^K \Big[ & C^{(k)} \gamma^{(k)} (L^{(k)} \otimes Q) \\
			& + c^{(k)} \gamma^{(k)} (\hat{D}^{(k)} \otimes Q) \Big] e(t).
		\end{aligned}
		\end{equation}

		Decompose each Laplacian matrix $L^{(k)}$ into its symmetric and skew-symmetric parts:
		\begin{equation}
			\label{eqn:laplacian_decomposition}
			L^{(k)} = L_s^{(k)} + L_a^{(k)},
		\end{equation}
		where:
		\begin{equation}
			\label{eqn:symmetric_part}
			L_s^{(k)} = \frac{L^{(k)} + [L^{(k)}]^\mathrm{T}}{2},\quad L_a^{(k)} = \frac{L^{(k)} - [L^{(k)}]^\mathrm{T}}{2}.
		\end{equation}
		
		Since $Q$ and $I_m$ are symmetric matrices, and $L_a^{(k)}$ is skew-symmetric, it follows that:
		\begin{equation}
			\label{eqn:skew_symmetric_zero}
			e^\mathrm{T}(t) (L_a^{(k)} \otimes Q) e(t) = 0.
		\end{equation}
		
		Therefore, the derivative of the Lyapunov function simplifies to:
		\begin{equation}
			\begin{aligned}
				\label{eqn:lyapunov_derivative_simplified}
				\dot{V}(t) = -2 e^\mathrm{T}(t) \sum_{k=1}^K \Big[ & C^{(k)} \gamma^{(k)} (L_s^{(k)} \otimes Q) \\
				& + c^{(k)} \gamma^{(k)} (\hat{D}^{(k)} \otimes Q) \Big] e(t).
			\end{aligned}
		\end{equation}

		According to the matrix inequality condition in the theorem \eqref{eqn:matrix_inequality}, we have:
		\begin{equation}
			\begin{aligned}
				\label{eqn:matrix_condition_applied}
				2 \sum_{k=1}^K \Big[ & C^{(k)} \gamma^{(k)} (L_s^{(k)} \otimes Q) \\
				& + c^{(k)} \gamma^{(k)} (\hat{D}^{(k)} \otimes Q) \Big] \geq \delta (I_N \otimes I_m).
			\end{aligned}
		\end{equation}

		Substituting this into \eqref{eqn:lyapunov_derivative_simplified}, we obtain:
		\begin{equation}
			\label{eqn:lyapunov_derivative_final}
			\dot{V}(t) \leq -\delta e^\mathrm{T}(t) e(t) = -\delta \| e(t) \|^2.
		\end{equation}
		
		Since $V(t)$ is positive definite and $\dot{V}(t)$ is negative definite, by LaSalle's invariance principle, the system state $e(t)$ will converge to the largest invariant set where $\dot{V}(t) = 0$, which implies $e(t) = 0$.
		
		Therefore, under the conditions of Theorem \ref{theorem:multi_pinning_consensus_modified}, the system \eqref{eqn:multi_dynamics} asymptotically achieves pinning consensus for all $i \in V$, i.e.,
		\begin{equation}
			\label{eqn:consensus_result_concluded}
			\lim_{t \to \infty} \| x_i(t) - x_i^* \| = 0, \quad \forall i \in V.
		\end{equation}
		
		This completes the proof.
	\end{proof}
	
	\begin{remark}
		\label{remark:control_gain_computation}
		The control gains \( c^{(k)} \) are determined by solving the Linear Matrix Inequality (LMI) presented in equation \eqref{eqn:matrix_inequality}. This LMI ensures that the system satisfies the required stability conditions for achieving pinning consensus. By formulating the problem as an LMI, we leverage convex optimization techniques to efficiently compute the optimal control gains.
	\end{remark}
	
	\section{Formulation of the Optimization Problem}
	\label{sec:optimization}
	
	In vehicular ad-hoc networks (VANETs), whether in a single network or a heterogeneous multi-network system, it is crucial to minimize the number of pinned vehicles while ensuring system stability. This problem is known to be NP-hard due to the combinatorial nature of selecting an optimal subset of pinning nodes. As the problem size grows, exact solutions become computationally infeasible, making heuristic optimization methods, such as genetic algorithms, a practical approach to efficiently identify a minimal set of pinning nodes that can achieve system convergence.
	
	Minimizing the number of pinned nodes reduces the required control resources, such as computational power and communication bandwidth, resulting in more cost-effective and scalable vehicular network systems. Efficient pinning node selection ensures that the control strategy remains effective as the number of vehicles in the network increases, facilitating the deployment of large-scale vehicular applications. By optimally selecting pinning nodes, the network can maintain stability and consensus even in the presence of node failures or communication disturbances, enhancing the overall robustness and reliability of the system. Finally, the optimization framework has broad applicability in real-world vehicular scenarios, including platooning, coordinated lane changes, adaptive cruise control, and traffic management systems, where synchronized behavior is critical for safety and efficiency. This formulation aims to achieve efficient control using the minimum number of control nodes while guaranteeing the stability of the overall system.
	
	\subsection{Constraints Analysis}
	
	To ensure that the vehicular system can achieve pinning consensus with the selected pinning nodes in both single and multi-network cases, the following constraints must be satisfied:
	
	\subsubsection{Stability Constraint}
	
	For the \textbf{single network case}, according to Theorem \ref{theorem:pinning_consensus}, the following linear matrix inequality (LMI) must hold to guarantee system stability and achieve pinning consensus:
	
	\begin{equation}
		\label{eqn:LMI_single}
		C L_s \otimes Q + c \hat{D} \otimes Q \geq \eta (I_N \otimes I_m),
	\end{equation}
	where:
	\begin{itemize}
		\item \( L_s = \frac{L + L^\mathrm{T}}{2} \) is the symmetric part of the Laplacian matrix \( L \).
		\item \( Q \succ 0 \) is a positive definite diagonal matrix.
		\item \( \hat{D} = \operatorname{diag}(d_1, d_2, \dots, d_N) \) is the pinning control matrix indicating which nodes are pinned.
		\item \( C > 0 \) is the coupling strength.
		\item \( c > 0 \) is the pinning control gain constant.
		\item \( \eta > 0 \) is a given constant ensuring the strictness of the inequality.
		\item \( I_N \) and \( I_m \) are identity matrices of size \( N \times N \) and \( m \times m \), respectively.
	\end{itemize}
	
	For the \textbf{heterogeneous multi-network case}, according to Theorem \ref{theorem:multi_pinning_consensus_modified}, the following LMI must hold:
	\begin{equation}
		\begin{aligned}
			\label{eqn:LMI_multi}
			C^{(k)} \gamma^{(k)} (L_s^{(k)} \otimes Q) & + c^{(k)} \gamma^{(k)} (\hat{D}^{(k)} \otimes Q) \\
			& \geq \eta (I_N \otimes I_m), \quad \forall k = 1, 2, \dots, K.
		\end{aligned}
	\end{equation}
	where:
	\begin{itemize}
		\item \( K \) is the number of networks.
		\item \( L_s^{(k)} = \frac{L^{(k)} + [L^{(k)}]^\mathrm{T}}{2} \) is the symmetric part of the Laplacian matrix \( L^{(k)} \) of network \( k \).
		\item \( \gamma^{(k)} > 0 \) is the internal coupling coefficient of network \( k \).
		\item \( C^{(k)} > 0 \) is the coupling strength in network \( k \).
		\item \( c^{(k)} > 0 \) is the pinning control gain in network \( k \).
		\item \( \hat{D}^{(k)} = \operatorname{diag}(d_1^{(k)}, d_2^{(k)}, \dots, d_N^{(k)}) \) is the pinning control matrix of network \( k \).
		\item \( Q \succ 0 \) is a positive definite diagonal matrix, common to all networks.
		\item \( \eta > 0 \) is a given constant.
	\end{itemize}
	
	This constraint ensures that the combined influence of all networks and their respective control gains is sufficient to stabilize the system by guaranteeing the negative definiteness of the Lyapunov function derivative, thus achieving pinning consensus across all networks.
	
	\subsubsection{Binary Variable Constraint}
	
	In both cases, the pinning control indicators \( d_i \) and \( d_i^{(k)} \) are binary variables:
	
	For the single network case:
	\begin{equation}
		d_i \in \{0, 1\}, \quad i = 1, 2, \dots, N.
	\end{equation}
	
	For the multi-network case:
	\begin{equation}
		d_i^{(k)} \in \{0, 1\}, \quad i = 1, 2, \dots, N; \quad k = 1, 2, \dots, K.
	\end{equation}
	
	This constraint reflects that each vehicle is either pinned or not in each network, which introduces complexity due to the combinatorial nature of the problem, especially in the multi-network case.
	
	\subsubsection{Positive Definiteness of Weighting Matrix}
	
	The weighting matrix \( Q \) must be positive definite:
	\begin{equation}
		Q \succ 0.
	\end{equation}
	
	By simplifying \( Q \) to a scalar, the LMI complexity is reduced, facilitating computational feasibility without compromising control efficacy across the vehicular networks. This ensures that the Lyapunov function is positive definite, which is necessary for the stability analysis in both single and multi-network cases.
	
	\subsubsection{Control Gain Constraint}
	
	The control gains must be positive to exert a stabilizing effect on the system:
	\begin{equation}
		c > 0, \quad \text{and} \quad c^{(k)} > 0, \quad k = 1, 2, \dots, K.
	\end{equation}
	Positive control gains amplify the influence of the pinned nodes, ensuring that the pinning control is effective in achieving system stability. This adaptability is essential for handling variations in network topologies and pinning configurations across different vehicular networks.
	
	\subsubsection{Challenges Arising from Constraint Conditions}	
	The feasibility of satisfying the constraints directly impacts whether the heuristic optimization algorithm can produce a viable solution. If the number of selected pinning nodes is too small, the LMI may have no solution, making it impossible to guarantee system stability. Conversely, increasing the number of pinning nodes could ensure that the LMI constraints are met and stability is achieved. However, this approach also significantly increases the load on the computational platform and communication costs due to direct interactions with the RSUs and the cloud platform.
	
	Thus, the optimization problem is not merely about determining the number of pinning nodes but also about selecting which specific vehicles should serve as pinning nodes. This selection must simultaneously satisfy the LMI constraint to ensure stability, adding complexity to the optimization process. While a multi-objective optimization approach could be considered to address these competing demands, this paper does not adopt a multi-objective model. The reason is that solutions from a multi-objective Pareto front do not inherently provide a single, actionable decision point. Instead, we formulate the optimization problem by treating the LMI feasibility as a constraint, thereby focusing on a solution that directly satisfies the system stability requirement.
	
	\subsection{Optimization Problem}
	
	\subsubsection{Single Network Case}
	
	In the single network scenario, the optimization problem aims to minimize the number of pinned nodes while satisfying the constraints:
	
	\begin{align}
		\text{Minimize} \quad & \sum_{i=1}^N d_i \\
		\text{Subject to} \quad & C L_s \otimes Q + c \hat{D} \otimes Q \geq \eta (I_N \otimes I_m), \\
		& d_i \in \{0, 1\}, \quad i = 1, 2, \dots, N, \\
		& Q \succ 0, \\
		& c > 0.
	\end{align}
	
	\subsubsection{Multi-Network Case}
	
	In the heterogeneous multi-network vehicular system, the optimization problem becomes more complex due to the presence of multiple networks and overlapping nodes. The objective is to minimize the total number of uniquely pinned nodes across all networks, which inherently encourages the selection of overlapping nodes as pinning nodes to enhance control efficiency. Selecting overlapping nodes as pinning nodes reduces redundant control actions across networks, thus enhancing resource efficiency. The optimization problem can be formulated as:
	
	\begin{align}
		\text{Minimize} \quad & \sum_{i=1}^N \mathcal{D}_i \\
		\text{Subject to} \quad 
		& \mathcal{D}_i = \max_{k} \{ d_i^{(k)} \}, \\
		& C^{(k)} \gamma^{(k)} (L_s^{(k)} \otimes Q) + c^{(k)} \gamma^{(k)} (\hat{D}^{(k)} \otimes Q) \nonumber \\ 
		& \quad \quad \quad \quad \quad \quad \quad \quad \geq \eta (I_N \otimes I_m), \\
		& \quad \quad \quad \quad \forall k = 1, \dots, K,  \nonumber\\
		& d_i^{(k)} \in \{0, 1\}, \quad i = 1, \dots, N, \\
		& Q \succ 0, \\
		& c^{(k)} > 0.
	\end{align}
	
	In this optimization problem:
	\begin{itemize}
		\item \( \mathcal{D}_i = 1 \) if vehicle \( i \) is pinned in at least one network, and \( \mathcal{D}_i = 0 \) otherwise. This prevents overlapping nodes from being counted multiple times.
		\item \( d_i^{(k)} = 1 \) if vehicle \( i \) is pinned in network \( k \), and \( d_i^{(k)} = 0 \) otherwise.
		\item The objective function \( \sum_{i=1}^N \mathcal{D}_i \) minimizes the total number of unique pinned nodes across all networks, which naturally promotes the selection of overlapping nodes without explicit weighting.
		\item The LMI constraint ensures that each network \( k \) achieves stability and consensus, given the control gains and pinning configuration.
	\end{itemize}
	
	\subsection{Challenges and Solution Strategy}
	
	The formulated optimization problems are characterized as mixed-integer linear matrix inequality (MILMI) problems, which are challenging to solve due to the combination of binary variables \( d_i \) and \( d_i^{(k)} \), representing pinning decisions, and the presence of LMI constraints, which ensure system stability. In large-scale vehicular networks, especially in the multi-network case, efficiently addressing this complexity is essential.
	
	In this context, the control gain variables \( c \), \( c^{(k)} \), and \( Q \) play a critical role in achieving the required system stability. Specifically:
	
	\begin{itemize}
		\item Control Gains \( c \) and \( c^{(k)} \): The gains \( c \) (for a single network) and \( c^{(k)} \) (for multiple networks) are dynamically determined through the solution of LMIs. For each network, an LMI is formulated based on the network’s Laplacian matrix and the pinning configuration. The solution of these LMIs provides feasible values for \( c \) and \( c^{(k)} \), ensuring the stability of each network. This adaptability is essential for handling variations in network topologies and pinning configurations across different vehicular networks.
		
		\item Weighting Matrix \( Q \): In the LMI formulation, \( Q \) is treated as a positive definite matrix variable. For computational simplicity, \( Q \) is often set as a scalar in the practical implementation, reducing the LMI complexity. This scalar \( Q \) is determined through the LMI solution process and plays a role in scaling the control influence across the network nodes.
	\end{itemize}
	
	By treating the LMI constraints as feasibility conditions, the optimization framework focuses on identifying a minimal set of pinning nodes that can satisfy system stability requirements across all networks while dynamically adjusting the control gains \( c \), \( c^{(k)} \), and \( Q \) to accommodate the varying network structures and pinning configurations.
	
	\section{Mixed-Variable Genetic Algorithm for Pinning Control Optimization}
	\label{sec:ga}
	
	This section presents an adaptive genetic algorithm (GA) tailored for pinning control optimization, including specific strategies for constraint handling and initialization, particularly under a heterogeneous multi-network vehicular network setup.
	
	\subsection{Penalty Function Design for Infeasibility of LMI}
	
	To ensure solutions meet the necessary Linear Matrix Inequality (LMI) constraints, a penalty function is incorporated within the GA, guiding the algorithm toward feasible regions. If the LMI constraint is violated, a penalty term \( \lambda \cdot \xi \) is added to the fitness function, where \( \lambda \) is the penalty coefficient. The penalty coefficient \( \lambda \) is dynamically adjusted throughout generations based on the infeasibility measure \( \xi \). Increasing \( \lambda \) for infeasible solutions with larger \( \xi \) encourages the population to move towards feasible regions, thus promoting convergence.
	
	For the \textbf{single-network case}, the infeasibility measure \( \xi_s \) quantifies the deviation from the LMI constraint:
	\[
	\xi_s = \left\| 2C L_s \otimes Q + 2c \hat{D} \otimes Q - \delta (I_N \otimes I_m) \right\|_F,
	\]
	where \( \| \cdot \|_F \) is the Frobenius norm. The fitness function \( fit_s \) for each individual is then defined as:	
	\begin{equation}
		\label{eqn:fitness_single}
		fit_s = \sum_{i=1}^N d_i + \lambda \cdot \xi_s,
	\end{equation}
	aiming to minimize the number of pinned nodes while penalizing infeasible solutions.
	
	For the \textbf{multi-network case}, the penalty function considers LMI constraint satisfaction across all networks. Let \( \xi_k \) denote the infeasibility measure for network \( k \):
	
	\[
	\begin{aligned}
		\xi_k = \Big\| 2C^{(k)} \gamma^{(k)} L_s^{(k)} \otimes Q^{(k)} &+ 2c^{(k)} \gamma^{(k)} \hat{D}^{(k)} \otimes Q^{(k)} \\
		&- \delta (I_N \otimes I_m) \Big\|_F.
	\end{aligned}
	\]	
	The overall infeasibility measure \( \xi_m \) aggregates across all networks:	
	\[
	\xi_m = \sum_{k=1}^K \xi_k.
	\]	
	The fitness function \( fit_m \) for each individual is defined as:
	
	\begin{equation}
		\label{eqn:fitness_multi}
		fit_m = \sum_{i=1}^N \mathcal{D}_i + \lambda \cdot \xi_m,
	\end{equation}	
	where \( \mathcal{D}_i = \max_k d_i^{(k)} \) indicates whether node \( i \) is pinned in at least one network. This function promotes solutions that minimize the total number of unique pinned nodes while ensuring system-wide LMI feasibility.
	
	\subsection{Initialization and Representation of \( \hat{D} \) and \( d_i \)}
	
	The objective in both single and multi-network cases is to minimize the total number of pinned nodes. In the \textbf{single-network case}, each binary variable \( d_i \) is defined for node \( i \) as:	
	\begin{equation}
		\hat{D} = \operatorname{diag}(d_1, d_2, \dots, d_N), \quad d_i \in \{0, 1\},
	\end{equation}	
	where \( d_i = 1 \) if node \( i \) is pinned and \( d_i = 0 \) otherwise. The optimization objective reduces to minimizing \( \sum_{i=1}^N d_i \).
	
	In the \textbf{multi-network case}, binary variables \( d_i^{(k)} \) represent the pinning state of node \( i \) in network \( k \):
	
	\begin{equation}
		\hat{D}^{(k)} = \operatorname{diag}(d_1^{(k)}, d_2^{(k)}, \dots, d_N^{(k)}), \quad d_i^{(k)} \in \{0, 1\},
	\end{equation}	
	for \( i = 1, 2, \dots, N \) and \( k = 1, 2, \dots, K \). The variable \( \mathcal{D}_i \) captures whether node \( i \) is pinned in at least one network:	
	\begin{equation}
		\mathcal{D}_i = \max_{k} \{d_i^{(k)}\}, \quad \mathcal{D}_i \in \{0, 1\}, \quad i = 1, 2, \dots, N.
	\end{equation}
	The optimization goal becomes:	
	\[
	\sum_{i=1}^N \mathcal{D}_i.
	\]	
	Each \( d_i^{(k)} \) is initially set to 1 with a low probability \( p_d \), promoting sparsity in pinning node selection:	
	\[
	d_i^{(k)} = \begin{cases}
		1 & \text{with probability } p_d, \\
		0 & \text{with probability } 1 - p_d.
	\end{cases}
	\]	
	This setup supports independent pinning across networks while accurately accounting for overlapping nodes through \( \mathcal{D}_i \).
	
	\subsection{Initialization of System Parameters \( C \), \( L_s \), and \( Q \)}
	
	For each network \( k \), the coupling strengths \( C^{(k)} \) are initialized as constants, assuming uniform interaction strength across networks:	
	\[
	C^{(k)} = C_k,
	\]	
	where \( C_k \) is predefined for network \( k \). This simplifies parameter initialization and reduces computational complexity. The initial values of coupling strengths \( C^{(k)} \) are selected within a specified range to ensure balanced control influence across networks, which is particularly important for heterogeneous networks with varying topologies and requirements.
	
	Each network’s Laplacian matrix \( L^{(k)} \) is derived from the adjacency matrix \( G^{(k)} \), with the symmetric part \( L_s^{(k)} \) calculated as:	
	\[
	L_s^{(k)} = \frac{L^{(k)} + [L^{(k)}]^T}{2}.
	\]	
	The positive definite matrix \( Q^{(k)} \) is determined during the LMI solving process for each network \( k \). For computational simplicity, \( Q^{(k)} \) can be treated as a scalar or a diagonal matrix.

	\subsection{Adaptive Genetic Selection Strategy}
	
	To solve the optimization problem involving binary decision variables and LMI constraints, we propose an adaptive genetic algorithm, given in Algorithm \ref{alg:ga}, specifically designed to address the unique challenges of pinning control in vehicular networks. The algorithm focuses on optimizing the binary variables \( d_i \) (for the single network case) or \( d_i^{(k)} \) (for the multi-network case) representing the pinning decisions.
	
	\begin{algorithm}[!htp]
		\caption{Mixed-Variable Genetic Algorithm for Pinning Control Optimization}
		\begin{algorithmic}[1]
			\REQUIRE System parameters: adjacency matrices \( G^{(k)} \), initial coupling strengths \( C^{(k)} \); GA parameters: population size \( N_{\text{pop}} \), maximum generations \( N_{\text{gen}} \), crossover probability \( p_c \), mutation probability \( p_m \), penalty coefficient \( \lambda \).
			
			\ENSURE Optimal pinning node set \( d^* \).
			
			\STATE \textbf{Population Initialization:} Generate an initial population of \( N_{\text{pop}} \) individuals. Each individual includes binary variables \( d_i \) (for single network) or \( d_i^{(k)} \) (for multi-network) representing pinning nodes.
			
			\FOR{each individual in the population}
			\STATE Initialize \( d_i \) or \( d_i^{(k)} \in \{0, 1\} \) with a low probability \( p_d \) to encourage sparsity.
			\STATE Compute \( \mathcal{D}_i = \max_k d_i^{(k)} \) (for multi-network case).
			\STATE For the given pinning configuration, solve the LMI(s) to find feasible \( c \), \( c^{(k)} \), and \( Q^{(k)} \).
			\IF{feasible solution for \( c \) and \( Q^{(k)} \) is found}
			\STATE Set \( \xi = 0 \) and calculate the fitness using \eqref{eqn:fitness_single} or \eqref{eqn:fitness_multi}.
			\ELSE 
			\STATE Calculate the infeasibility measure \( \xi \) and compute the fitness with the penalty term.
			\ENDIF
			\ENDFOR
			
			\FOR{generation \( \text{gen} = 1 \) to \( N_{\text{gen}} \)}
			\STATE Select individuals based on fitness using tournament selection.
			
			\STATE Perform crossover on \( d_i \) or \( d_i^{(k)} \) with probability \( p_c \).
			
			\STATE Apply mutation to offspring with probability \( p_m \) by flipping bits in \( d_i \) or \( d_i^{(k)} \).
			
			\STATE For each offspring, compute $\mathcal{D}_i$ (if applicable) and solve the LMI(s) to verify feasibility.
			
			\STATE Calculate fitness for each offspring using the updated pinning configuration and penalize infeasible solutions.
			
			\STATE Combine parents and offspring, select the best \( N_{\text{pop}} \) individuals based on fitness.
			\ENDFOR
			
			\STATE \textbf{Output:} The optimal pinning node set \( d^* \).
		\end{algorithmic}
		\label{alg:ga}
	\end{algorithm}
	The feasibility of the LMI solution is directly related to the selection of pinning nodes. In extreme cases where all vehicle nodes are chosen as pinning nodes, the LMI is more likely to have a feasible solution, but this negates the purpose of optimizing pinning control to reduce communication overhead. The proposed GA offers a viable approach to balance LMI feasibility and minimizing the number of pinned nodes.
	
	By focusing on the binary pinning variables and determining \( c \), \( c^{(k)} \), and \( Q^{(k)} \) through LMI solving during fitness evaluation, the algorithm efficiently navigates the search space. The penalty function allows the GA to retain high-quality solutions that may initially violate constraints, preserving promising individuals in the search for an optimal pinning configuration.
	
	The adaptive genetic selection strategy ensures diversity in the population and promotes convergence towards feasible and optimal solutions. By incorporating constraint handling directly into the fitness evaluation, the GA effectively addresses the mixed-integer nature of the optimization problem involving both combinatorial decisions and continuous feasibility constraints.

		\section{Simulation and Discussion}
	\label{sec:sim}
	
	\subsection{Scenario Description and Simulation Settings}
	
	This study considers two main scenarios: a single vehicular network and a multi-network vehicular ad-hoc network (VANET) with overlapping sub-networks. Each vehicle is modeled as a node in a structured traffic environment, represented as a first-order continuous-time multi-agent system.
	
	\subsubsection{Single Network Scenario}
	
	In the single network scenario, vehicles are tasked with maintaining stable traffic flow by tracking a lead vehicle, with only a subset of vehicles designated as “pinned vehicles” receiving direct control inputs. This selective pinning approach minimizes control requirements across the network. The network is represented by an asymmetric adjacency matrix \( G \), where \( G_{ij} \) denotes the presence and strength of a communication link from vehicle \( j \) to vehicle \( i \), capturing inter-vehicular influences. We use a genetic algorithm (GA) to identify the optimal set of control nodes, aiming to minimize interventions while ensuring system-wide stability.
	
	\subsubsection{Multi-Network Scenario}
	
	In the multi-network scenario, we analyze a heterogeneous VANET composed of multiple overlapping sub-networks. Each sub-network represents a unique communication topology with distinct nodes and connection patterns. Due to network overlap, some vehicles belong to multiple sub-networks, resulting in shared nodes that reflect real-world communication and cooperation requirements. Each sub-network is described by an asymmetric adjacency matrix \( G^{(k)} \) for \( k = 1, 2, \dots, K \), where \( K = 3 \) in this study. The GA-based pinning control strategy is applied independently within each sub-network to select optimal control nodes, with overlapping nodes prioritized to maximize control efficiency.
	
	\subsection{Simulation Settings for Single Vehicular Network}
	
	The target state for all vehicles, \( x_{\text{desired}} \), was set to 90. Initial vehicle states were randomly initialized following a Gaussian distribution with mean \( \mu = 100 \) and standard deviation \( \sigma = 15 \), simulating diverse starting conditions. The adjacency matrix \( G \) was designed as a sparse, asymmetric matrix, where elements \( g_{ij} \) were generated randomly within \( [0, 1] \) as follows:
	\[
	G_{ij} = 
	\begin{cases} 
		g_{ij}, & \text{if } g_{ij} > 0.5, \\
		0, & \text{if } g_{ij} \leq 0.5.
	\end{cases}
	\]
	This configuration yields a sparse matrix with \( G_{ij} > 0 \) denoting a directed link from node \( i \) to node \( j \). The coupling strength \( C \) was set to 0.8, and the control gain \( c \) was set within the range \( c \in [0, 50] \).
	
	\subsection{Simulation Settings for Multi-Network Vehicular System}
	
	The multi-network VANET comprises overlapping sub-networks, each with distinct topologies. Vehicles within each sub-network aim to reach specific target states, such as speed or position, through interactions with neighboring nodes. Shared nodes across networks represent vehicles participating in multiple sub-networks, introducing variability in communication and cooperation. 
	
	Each network is represented by an asymmetric adjacency matrix \( G^{(k)} \) for \( k = 1, 2, \ldots, K \) (where \( K = 3 \) here), capturing directional information flow. Adjacency matrix elements \( g_{ij} \) were generated as follows:
	\[
	G_{ij}^{(k)} = 
	\begin{cases} 
		g_{ij}, & \text{if } g_{ij} > 0.8, \\
		0, & \text{if } g_{ij} \leq 0.8,
	\end{cases}
	\]
	where \( g_{ij} \) represents the communication strength between nodes \( i \) and \( j \), sampled from a uniform distribution. Due to the relatively large number of vehicles in the multi-network vehicular system, we adjusted the threshold value to 0.8 to make the coupling matrix sparser, aligning it more closely with real-world conditions.
	
	To evaluate the control strategy’s scalability, we tested three scenarios with vehicle counts of 50, 100, and 200, corresponding to small, medium, and large network scales. Each sub-network controls an independent state variable, with target values set to 50, 70, and 120. Initial values of each state variable were randomly generated from normal distributions with means of 45, 80, and 130, and variances of 10, 12, and 8, respectively. Vehicles were assigned either exclusively to a single network or concurrently to multiple networks.
	
	\subsection{Configuration for Adaptive Genetic Algorithm}
	
	For the single network, the GA was configured with \( N_{\text{pop}} = 100 \), \( N_{\text{gen}} = 20 \), a crossover probability \( p_c = 0.8 \), and a mutation probability \( p_m = 0.05 \). The penalty coefficient \( \lambda = 10 \) in the fitness function prioritized minimizing pinned nodes while ensuring stability.
	
	In the multi-network vehicular system, the GA’s population size was scaled based on network size, with values set to 100, 200, and 400 for vehicle counts of 50, 100, and 200, respectively. All other GA parameters were consistent with the single network configuration.
	
	\subsection{Results and Discussions in Single Vehicular Network}
	
	Figure \ref{fig:convergence_plot} illustrates the state trajectories of all vehicles as they converge towards the desired state. The solid red lines represent the pinned vehicles, which were directly controlled by the system, while the dashed blue lines indicate the unpinned vehicles, whose states were indirectly influenced by the pinned nodes through network coupling. The target state is shown as a black dashed line for reference. As observed, both the pinned and unpinned nodes successfully converged to the desired state within approximately \( T = 5 \) seconds of simulation time. The pinned nodes, being directly controlled, exhibited faster convergence compared to the unpinned nodes, which followed more gradual trajectories.
	
	The final error between the system state and the desired state at the end of the simulation was effectively zero for all 50 vehicles, demonstrating the effectiveness of the proposed control strategy in maintaining system stability. Furthermore, the genetic algorithm selected about 22 pinned nodes (around 40\% of all nodes), which is a relatively small subset of the total \( N = 50 \) vehicles. This result confirms that the pinning control strategy, when optimized by GA, can achieve consensus with minimal control effort, reducing both computational and communication overhead in large-scale vehicular networks.
	
	\begin{figure}[!htp]
		\centering
		\includegraphics[width=0.8\linewidth]{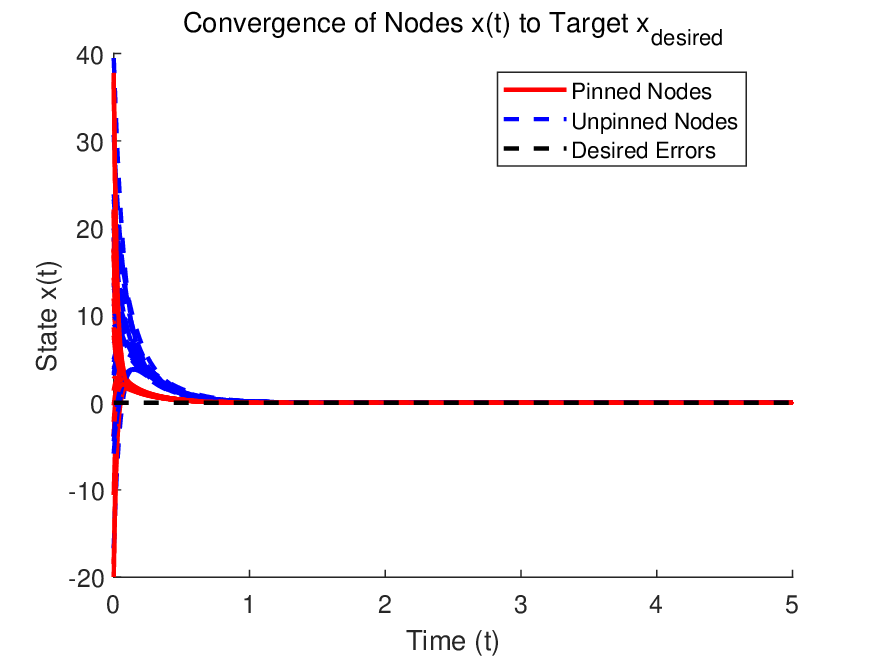}
		\caption{State trajectories of all vehicles converging to the desired state. Pinned vehicles are shown with solid red lines, unpinned vehicles with dashed blue lines, and the desired error is indicated by the black dashed line.}
		\label{fig:convergence_plot}
	\end{figure}
	
	As shown in Fig. \ref{fig:convergence_plot}, the states of the pinned nodes, controlled through cloud platforms or RSUs, reach the control target in approximately 0.3 seconds. The non-pinned nodes, influenced by the pinned nodes, achieve the target state within 1 second, effectively meeting the requirements of vehicular network applications.
	
	The choice of control gain \( c \) has a significant impact on the number of vehicles designated as pinning nodes. When the value of \( c \) is not appropriately chosen, stability can still be achieved by increasing the number of pinned nodes. To further investigate this relationship, we conducted experiments using fixed values of \( c \) rather than determining it through the LMI optimization. In 50 trials with varying coupling conditions, we set control gain values of 1, 2, 3, 4, and 5, and performed statistical analysis. The results are illustrated in Fig.~\ref{fig:number} and Fig.~\ref{fig:error}. 
	
	As shown in Fig.~\ref{fig:number}, the average number of pinned vehicles for the five fixed control gain values across 50 trials were 37, 40, 42, 43, and 41 for control gains of 1, 2, 3, 4, and 5, respectively. This indicates that, on average, 80\% of the vehicles need to be pinned, which limits the efficiency of pinning control and increases the communication load on roadside units or cloud platforms. Additionally, as illustrated in Fig.~\ref{fig:error}, the errors associated with fixed control gains are significantly higher than those achieved with the LMI-based control gain strategy, underscoring the advantages of the LMI approach in reducing the number of control nodes required for stability.
	
	\begin{figure}[!htp]
		\centering
		\includegraphics[width=0.8\linewidth]{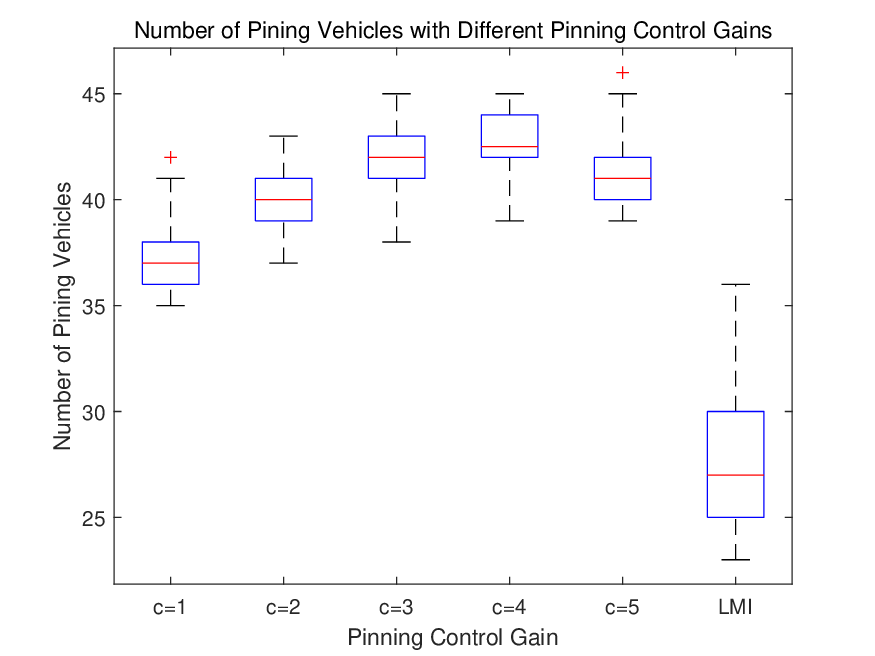}
		\caption{Statistical analysis of the number of pinned vehicles for control gains \( c = 1, 2, 3, 4, 5 \) in 50 trials of asymmetric coupling random scenarios, compared to the control gain selected by the LMI method.}
		\label{fig:number}
	\end{figure}
	
	\begin{figure}[!htp]
		\centering
		\includegraphics[width=0.8\linewidth]{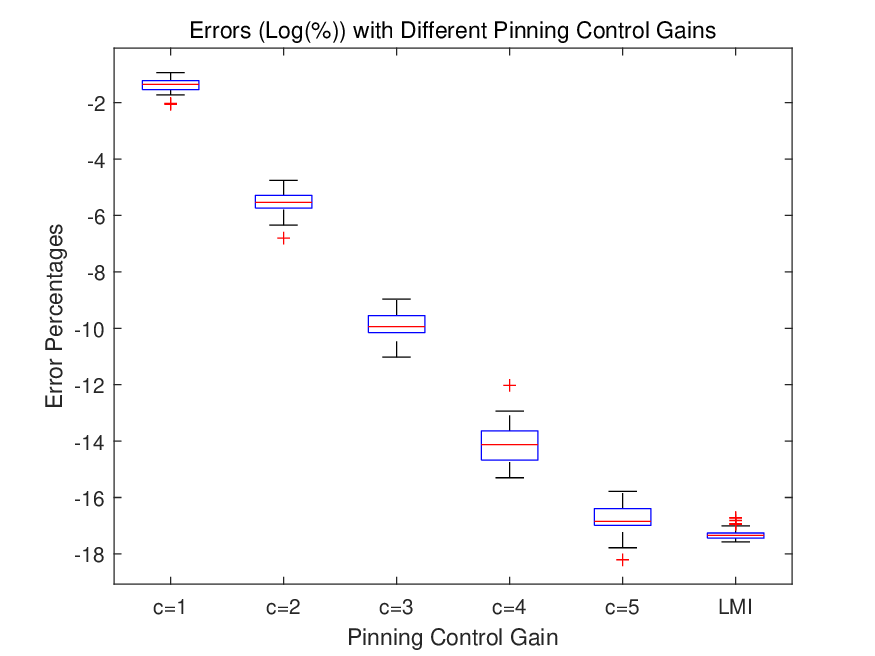}
		\caption{Statistical analysis of pinning control consistency errors for control gains \( c = 1, 2, 3, 4, 5 \) in 50 trials of asymmetric coupling random scenarios, compared to the control gain selected by the LMI method.}		
		\label{fig:error}
	\end{figure}
	
	Overall, the LMI-based control gain strategy demonstrates superior performance, achieving stability with fewer control nodes and lower error rates. This highlights the LMI method's effectiveness and efficiency in large-scale vehicular network applications, providing robust control with optimized resource allocation.

	\subsection{Results and Discussion for Multi-Network Vehicular Systems}
	
	The simulation results for the multi-network vehicular system are presented below. Figure \ref{fig:50selection} shows the distribution of 50 vehicles across three networks, with the x-axis representing vehicle IDs, the y-axis denoting network IDs, and an additional line indicating Pinning Status. In this setup, Network 1 contains 35 vehicles, Network 2 has 25 vehicles, and Network 3 also has 25 vehicles. Of the 50 vehicles, 25 are present in two networks simultaneously, and 5 vehicles participate in all three networks. Vehicles in Network 1 are marked in blue, those in Network 2 in red, and those in Network 3 in green.
	
	Some vehicles exist solely within a single network; for instance, vehicle ID 1 exists only in Network 1, while vehicle ID 16 is shared between Network 1 and Network 2, and vehicle ID 21 spans all three networks. If a vehicle is selected as a pinning node, its Pinning Status is indicated by a black diamond marker. For example, as shown, vehicle ID 1 is selected as a pinning node, while vehicle ID 3 is not. As depicted in Fig.~\ref{fig:50selection}, 33 out of 50 vehicles (approximately 60\%) are selected as pinning nodes. Of these, 4 vehicles belong to all three networks (80\% of such vehicles) and 18 out of the 25 vehicles (72\%) that exist in two networks are chosen as pinning nodes. This demonstrates the importance of selecting shared vehicles in multi-network settings to enhance pinning control efficiency. As illustrated in Fig.~\ref{fig:50con}, the control strategies across the three networks reach consensus within approximately 1 to 1.5 seconds.
	
	\begin{figure*}[!htbp]
		\centering
		\includegraphics[width=2.2\columnwidth]{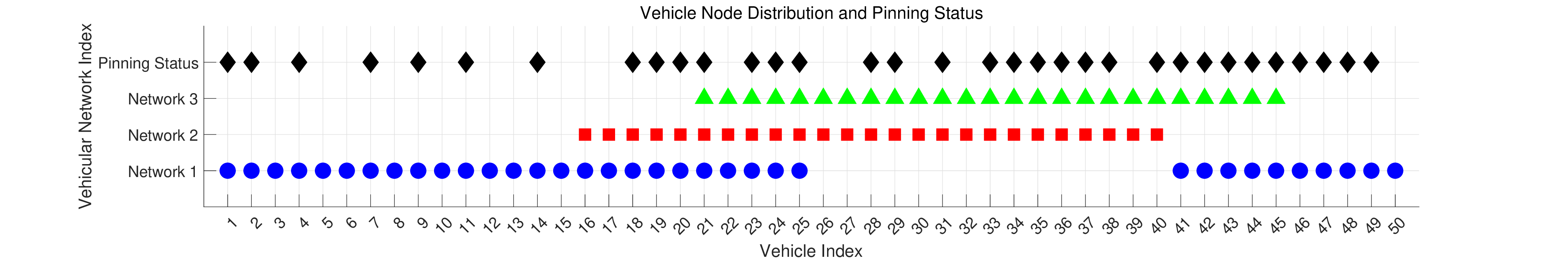}
		\caption{Distribution of 50 Vehicles Across Three Networks and Selection Scheme for Pinning Vehicle Nodes}
		\label{fig:50selection}
	\end{figure*}
	
	\begin{figure}[!htbp]
		\centering
		\includegraphics[width=\columnwidth]{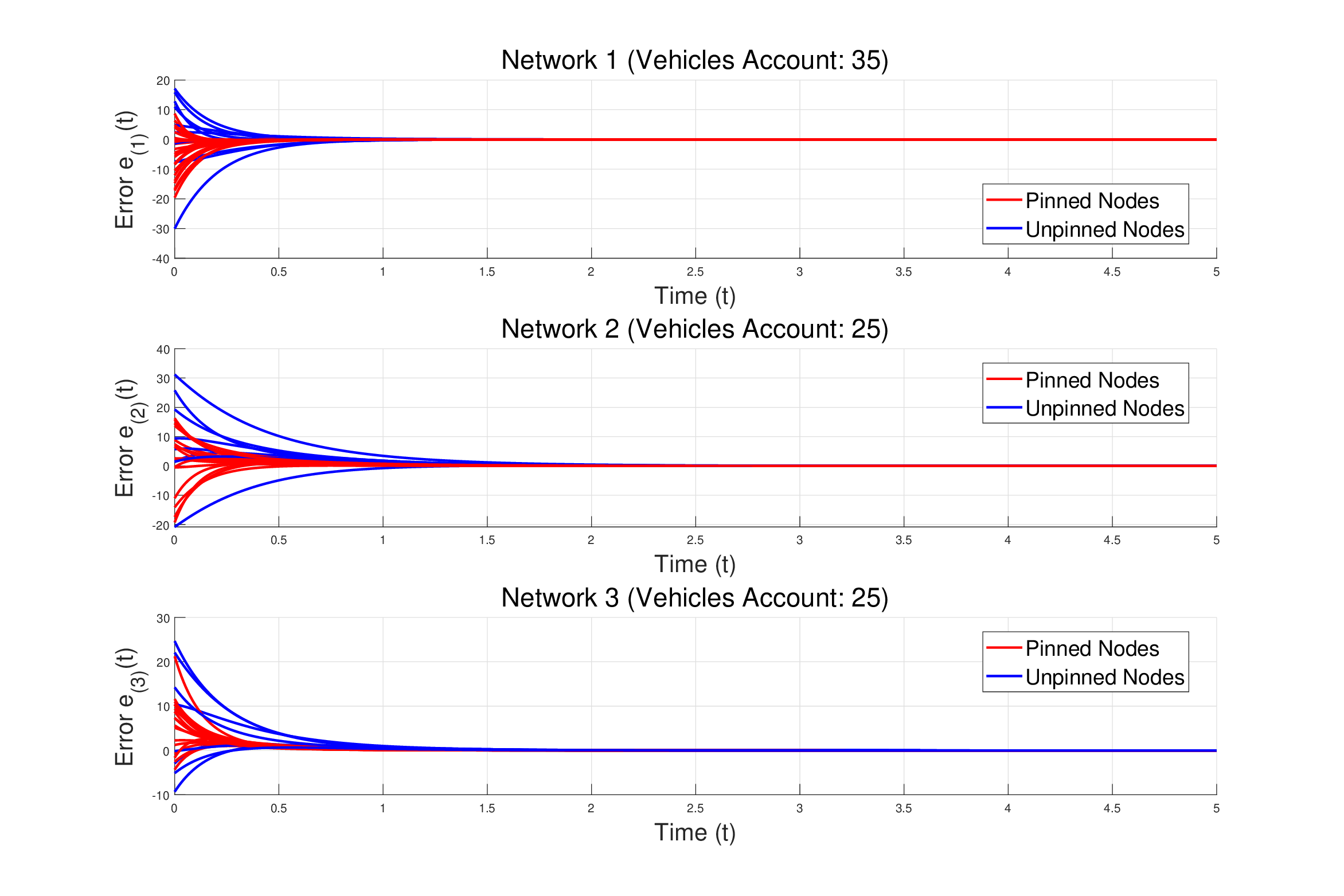}
		\caption{Convergence of 50 Vehicle states across multi-network vehicular systems. Each sub-network's convergence behavior is illustrated separately, indicating both the pinned and unpinned vehicles' convergence trajectories. }
		\label{fig:50con}
	\end{figure}
	
	For the medium-scale scenario with 100 vehicles, as shown in Fig.~\ref{fig:100selection}, Network 1 contains 70 vehicles, while Networks 2 and 3 each contain 50 vehicles. Of these vehicles, 10 are shared across all three networks, and 50 are shared between two networks, with the remaining 40 confined to a single network. A total of 77 vehicles (77\% of the total) were selected as pinning nodes. Specifically, 8 of the 10 vehicles that span all three networks (80\%) were pinned, along with 39 of the 50 vehicles (78\%) shared by two networks and 30 of the 40 vehicles (75\%) within a single network. As shown in Fig.~\ref{fig:100con}, consensus among pinning control across multiple states is achieved within approximately 1 to 1.5 seconds for this 100-vehicle scenario.
	
	\begin{figure*}[!htbp]
		\centering
		\includegraphics[width=2.2\columnwidth]{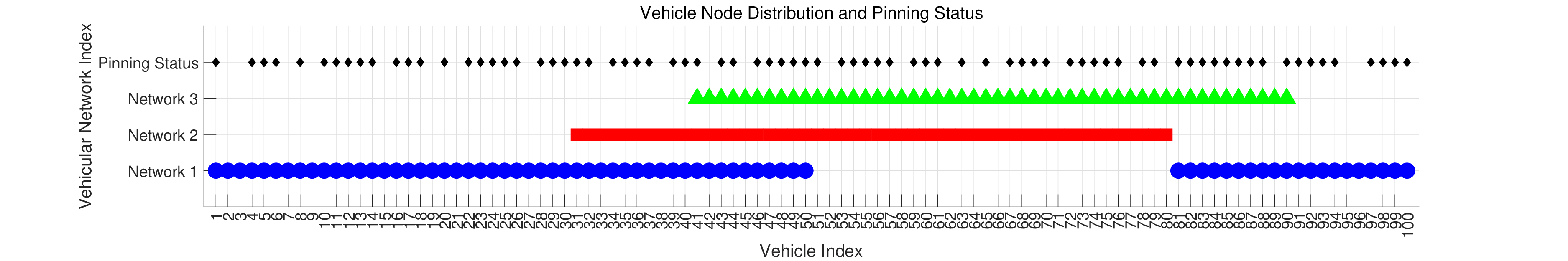}
		\caption{Distribution of 100 Vehicles Across Three Networks and Selection Scheme for Pinning Vehicle Nodes.}
		\label{fig:100selection}
	\end{figure*}
	
	\begin{figure}[!htbp]
		\centering
		\includegraphics[width=\columnwidth]{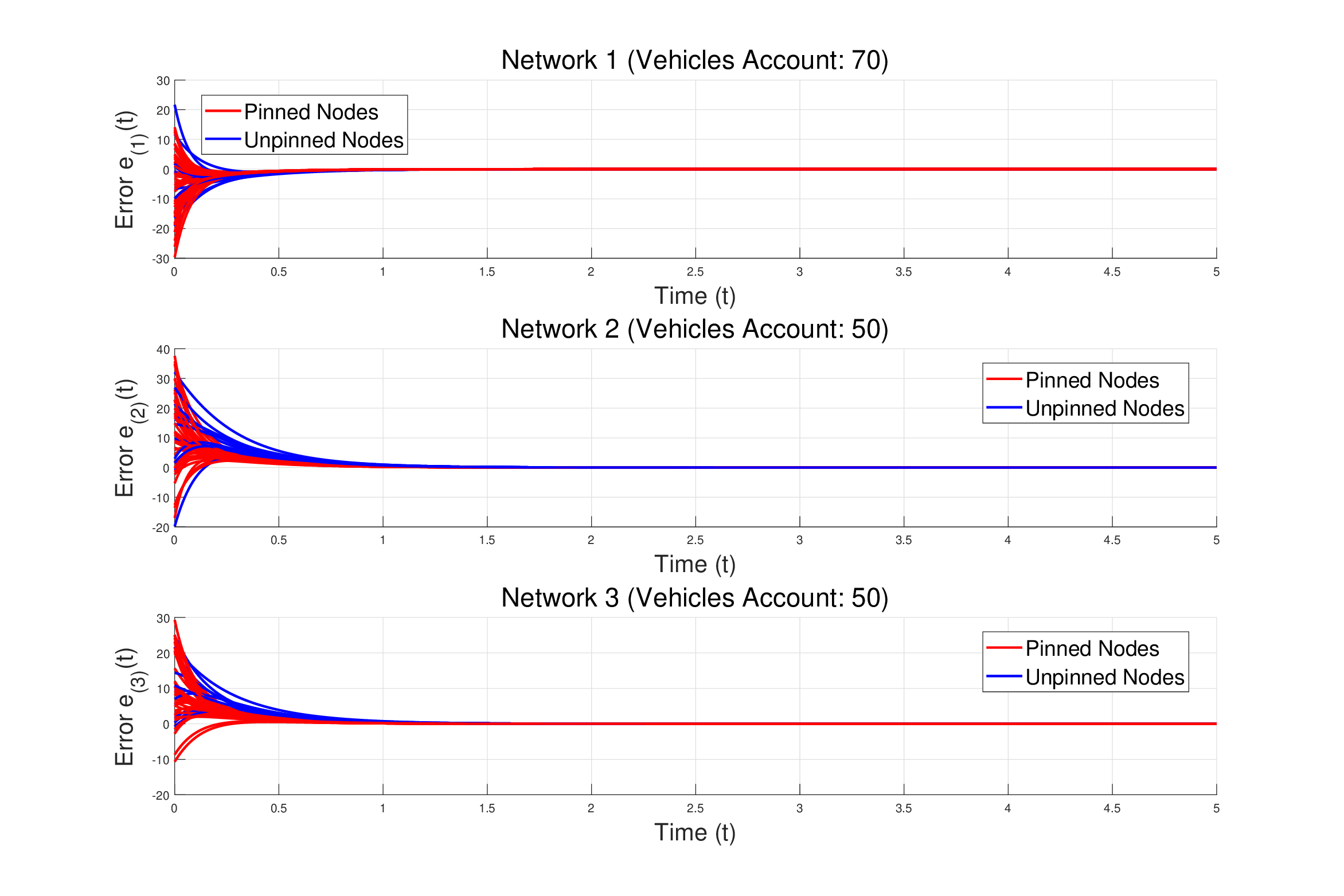}
		\caption{Convergence of 100 Vehicle states across multi-network vehicular systems. Each sub-network's convergence behavior is illustrated separately, indicating both the pinned and unpinned vehicles' convergence trajectories.}
		\label{fig:100con}
	\end{figure}
	
	In the large-scale scenario with 200 vehicles, illustrated in Fig.~\ref{fig:200selection}, Network 1 contains 140 vehicles, and Networks 2 and 3 each contain 100 vehicles. Among these, 20 vehicles exist in all three networks, 100 are shared between two networks, and 80 are confined to a single network. Based on the algorithm's selection, 129 vehicles (64\% of the total) are chosen as pinning nodes. Specifically, 13 of the 20 vehicles shared across all three networks (65\%) were pinned, along with 69 of the 100 vehicles (69\%) shared by two networks, and 47 of the 80 vehicles (58\%) within a single network. As shown in Fig.~\ref{fig:200con}, the control performance across the 200 vehicles achieves consensus within approximately 1 to 1.5 seconds.
	
	\begin{figure*}[!htbp]
		\centering
		\includegraphics[width=2.2\columnwidth]{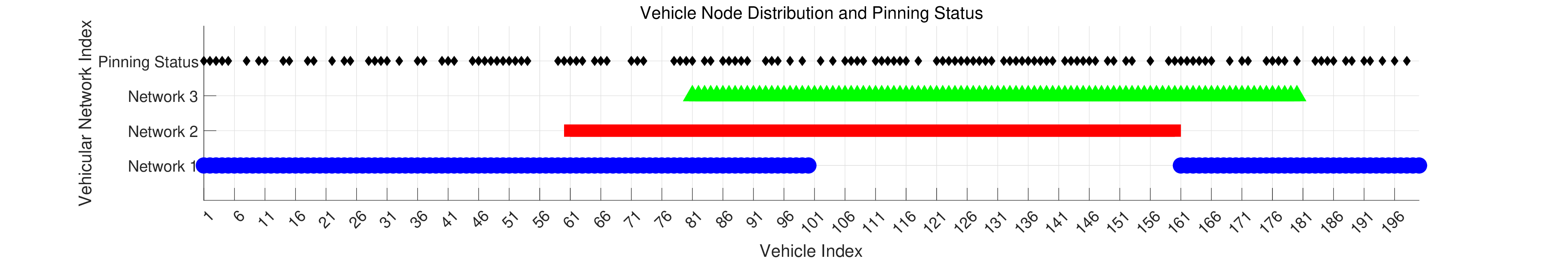}
		\caption{Distribution of 200 Vehicles Across Three Networks and Selection Scheme for Pinning Vehicle Nodes}
		\label{fig:200selection}
	\end{figure*}
	
	\begin{figure}[!htbp]
		\centering
		\includegraphics[width=\columnwidth]{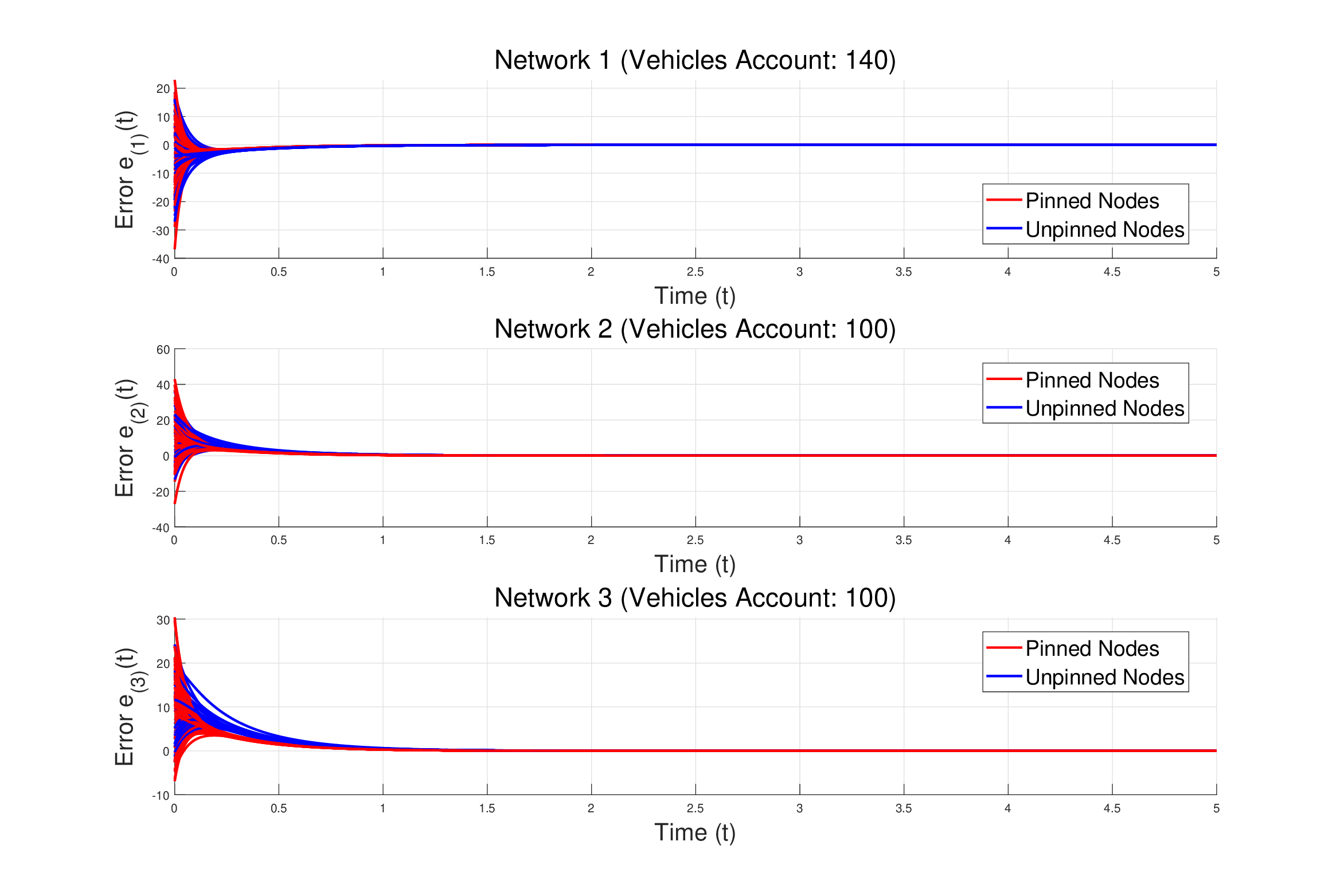}
		\caption{Convergence of 200 Vehicle states across multi-network vehicular systems. Each sub-network's convergence behavior is illustrated separately, indicating both the pinned and unpinned vehicles' convergence trajectories.}
		\label{fig:200con}
	\end{figure}
	
	For a comprehensive assessment, we conducted a statistical analysis involving 30 independent experiments for each of the 50-, 100-, and 200-vehicle cases. This analysis, illustrated in Fig.~\ref{fig:vq_per} and Fig.~\ref{fig:vq_error}, evaluates the proportion of pinned vehicles and control error. As the number of vehicles increases, the proportion of pinned vehicles stabilizes at a relatively low level, effectively enhancing the control efficiency of the network. In contrast, with fewer vehicles, inter-network connections vary more substantially, resulting in greater fluctuations in the proportion of pinned vehicles. As shown in Fig.~\ref{fig:vq_error}, across scenarios with 50, 100, and 200 vehicles, the control error converges to a range between \(10^{-7}\) and \(10^{-10}\) within 1.5 seconds, indicating the robustness of the control algorithm and the effectiveness of the pinning node selection strategy.
	
	\begin{figure}[!htbp]
		\centering
		\includegraphics[width=0.8\columnwidth]{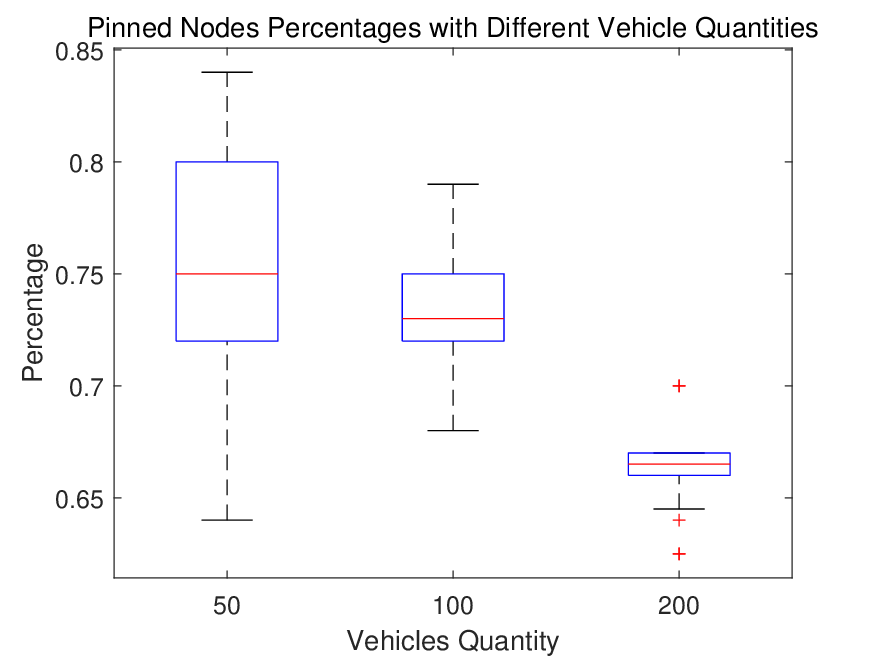}
		\caption{Statistical Analysis for Percentage of Pinned Vehicles under Pinning Control with 50, 100, and 200 Vehicles.}
		\label{fig:vq_per}
	\end{figure}
	
	\begin{figure}[!htbp]
		\centering
		\includegraphics[width=0.8\columnwidth]{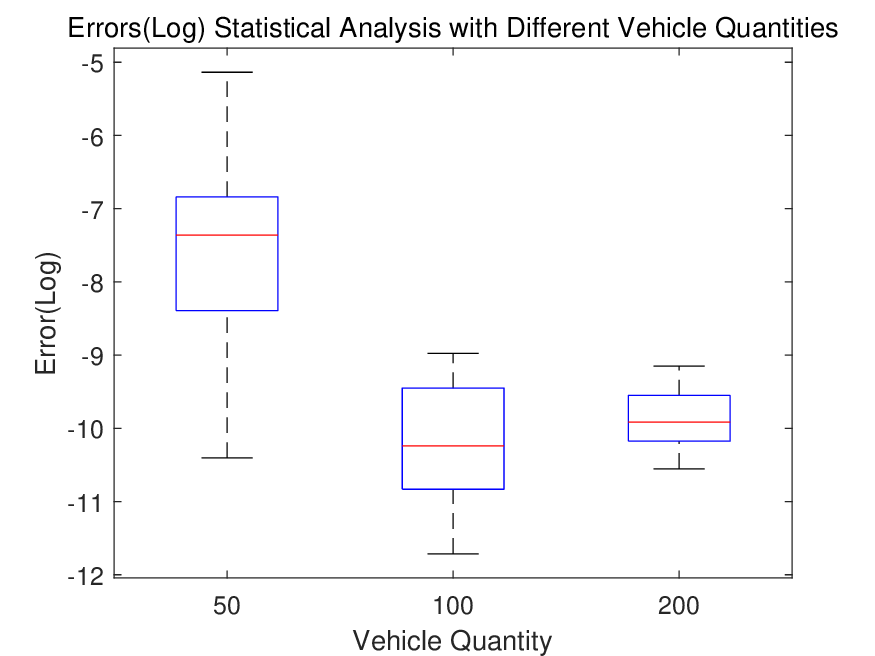}
		\caption{Statistical Analysis for Error (Log) of Pinned Vehicles under Pinning Control with 50, 100, and 200 Vehicles.}
		\label{fig:vq_error}
	\end{figure}

	\section{Conclusions and Future Work}
	\label{sec:con}	
	In this paper, we have developed a comprehensive pinning control optimization framework for heterogeneous multi-network VANETs. The main contributions and findings can be summarized in three aspects. First, we established theoretical foundations for pinning control convergence in both single and multi-network scenarios, providing rigorous stability conditions through Lyapunov analysis and LMI formulation. Second, we proposed an adaptive mixed-variable genetic algorithm that effectively addresses the challenges of asymmetric coupling and network overlap, incorporating an adaptive penalty function to handle LMI constraints. Third, through extensive simulations across different network scales, we demonstrated that our framework significantly reduces the number of required control nodes while maintaining system stability, particularly when leveraging overlapping nodes across multiple networks.
	
	The simulation results validated several key advantages of our approach. The proposed method achieved rapid consensus across various network configurations while minimizing control overhead. The framework demonstrated robust performance in handling time delays and topology changes, essential characteristics for practical VANET implementations. Moreover, the prioritization of overlapping nodes as control candidates proved effective in reducing overall system complexity while maintaining control efficiency.
	
	Future research could explore further enhancements to the GA framework, focusing on reducing computational costs and improving scalability for larger networks. The method could also be extended to accommodate diverse vehicle types with different control objectives, particularly in mixed autonomous and human-driven scenarios. 

	\bibliographystyle{IEEEtran}  
	\bibliography{refe.bib}

	\vspace{0pt}
	\begin{IEEEbiography}[{\includegraphics[width=1in,height=1.25in,clip,keepaspectratio]{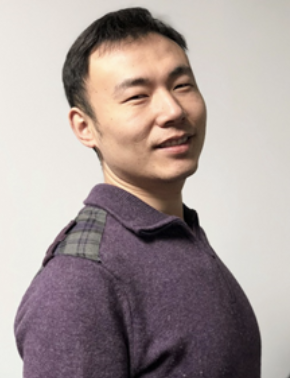}}]{Weian GUO} received the M.Eng. degree in navigation, guidance, and control from Northeastern University, Shenyang, China, in 2009, and the doctor of engineering degree from Tongji University, Shanghai, China, in 2014. From 2011 to 2013, he was sponsored by China Scholarship Council to carry on his research at the Social Robotics Laboratory, National University of Singapore. He is currently an associate Professor with the Sino-German College of Applied Science, Tongji University. His interests include computational intelligence, control theory and vehicular network.
	\end{IEEEbiography}
	
	\begin{IEEEbiography}[{\includegraphics[width=1in,height=1.25in,clip,keepaspectratio]{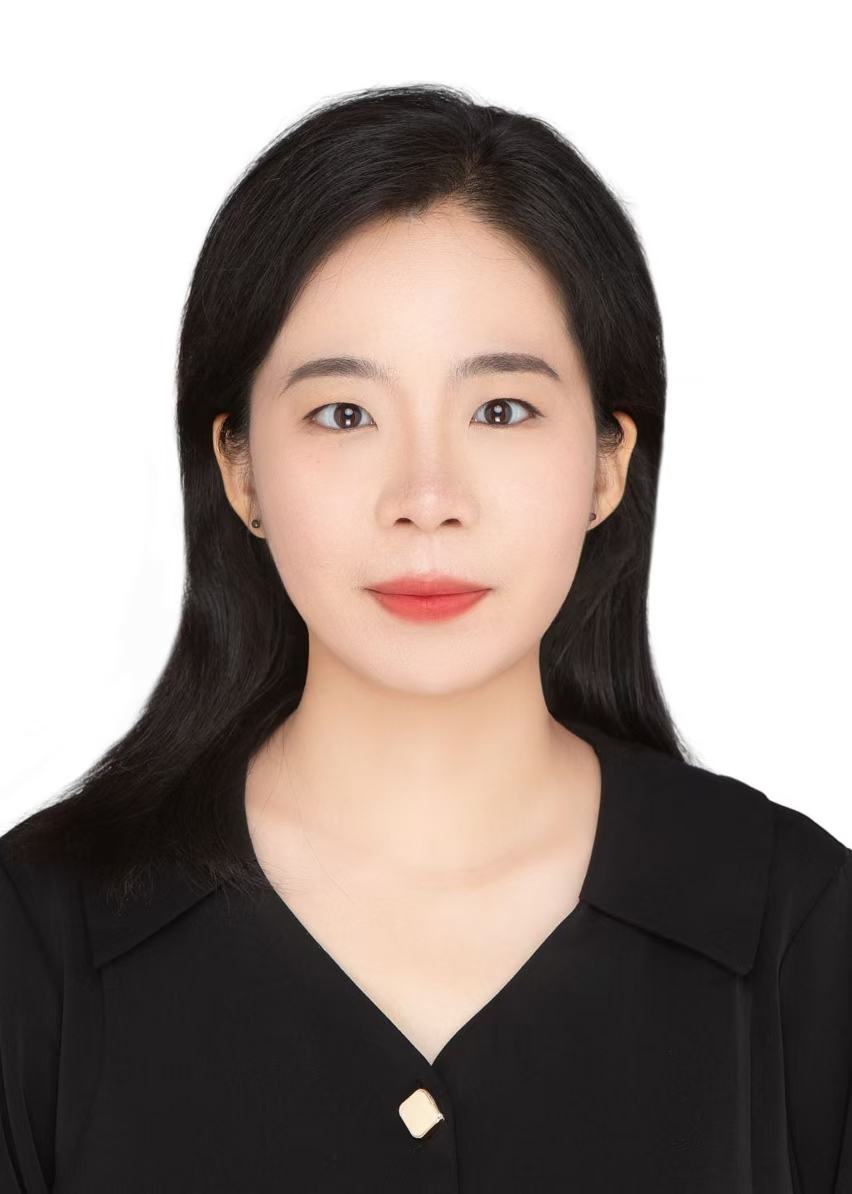}}]{Ruizhi SHA} received the B.S. degree in automa-
		tion in 2014, and the Ph.D. degree in control science and engineering in 2023, from Xi’an Jiaotong University, Xi’an,China. Since 2024, she has been working in the School of Electronic and Information Engineering, Tongji University. Her research interests include uncertain systems control, nonlinear systems control, and their applications.
	\end{IEEEbiography}

	\begin{IEEEbiography}[{\includegraphics[width=1in,height=1.25in,clip,keepaspectratio]{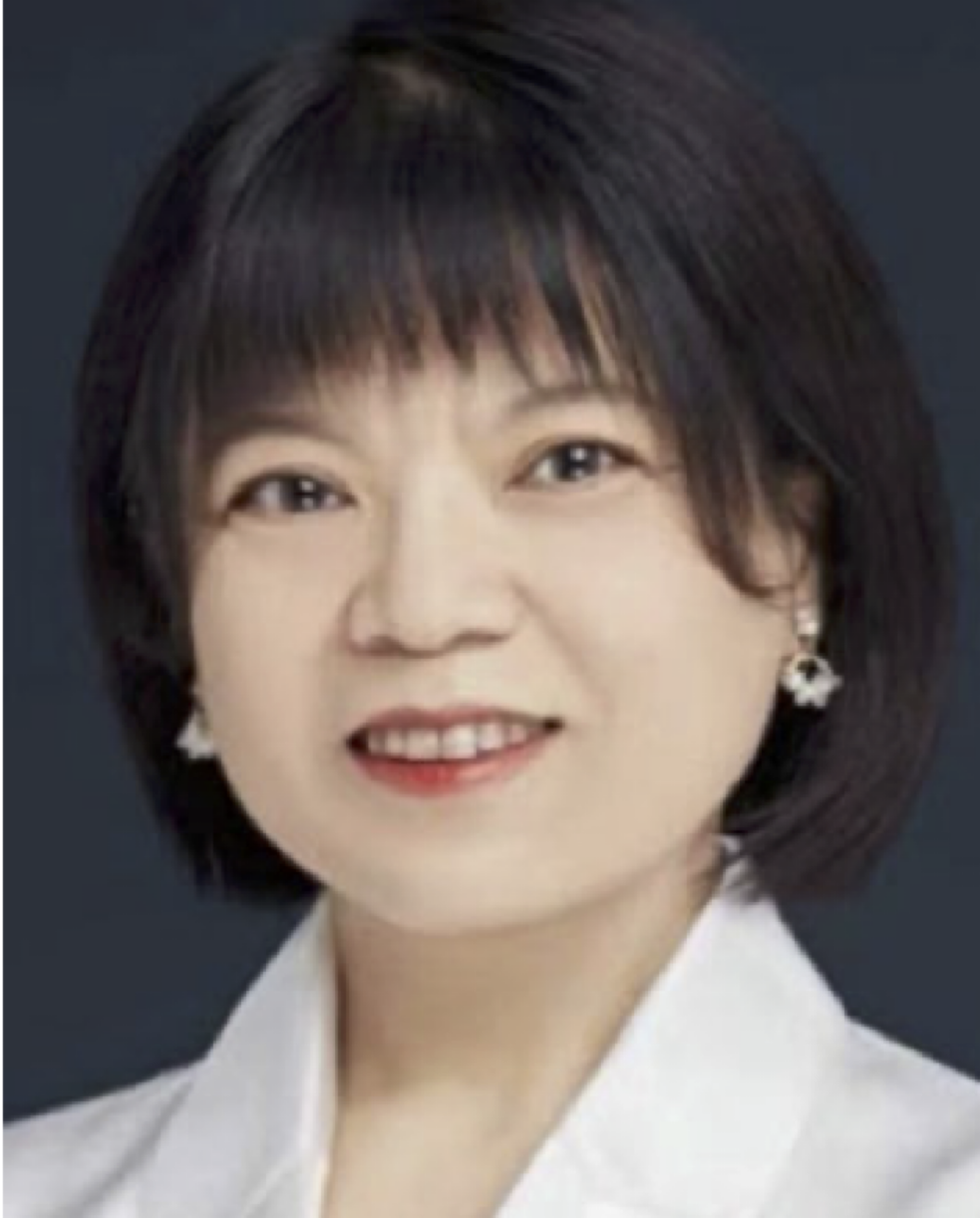}}]{Li LI} received the B.S. and M.S. degrees in electrical automation from Shengyang Agriculture University, Shengyang, China, in 1996 and 1999, respectively, and the Ph.D. degree in mechatronics engineering from the Shenyang Institute of Automation, Chinese Academy of Science, Shenyang, in 2003. She joined Tongji University, Shanghai, China, in 2003, where she is currently a professor of control science and engineering. She has over 50 publications, including 4 books, over 30 journal papers, and 2 book chapters. Her current research interests include production planning and scheduling, computational intelligence, data-driven modeling and optimization, semiconductor manufacturing, and energy systems.
	\end{IEEEbiography}
	\begin{IEEEbiography}[{\includegraphics[width=1in,height=1.25in,clip,keepaspectratio]{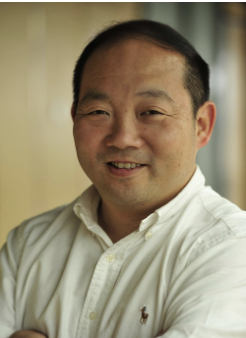}}]{Lun ZHANG}
		received the B.S. and Ph.D degrees in computer communications, transportation information engineering and control, Tongji University, Shanghai, China, in 1992 and 2005, respectively. He is currently a professor with School of Transportation, Tongji University. His interests include intelligent transportation, computational intelligence, and deep learning.
	\end{IEEEbiography}	
	\begin{IEEEbiography}[{\includegraphics[width=1in,height=1.25in,clip,keepaspectratio]{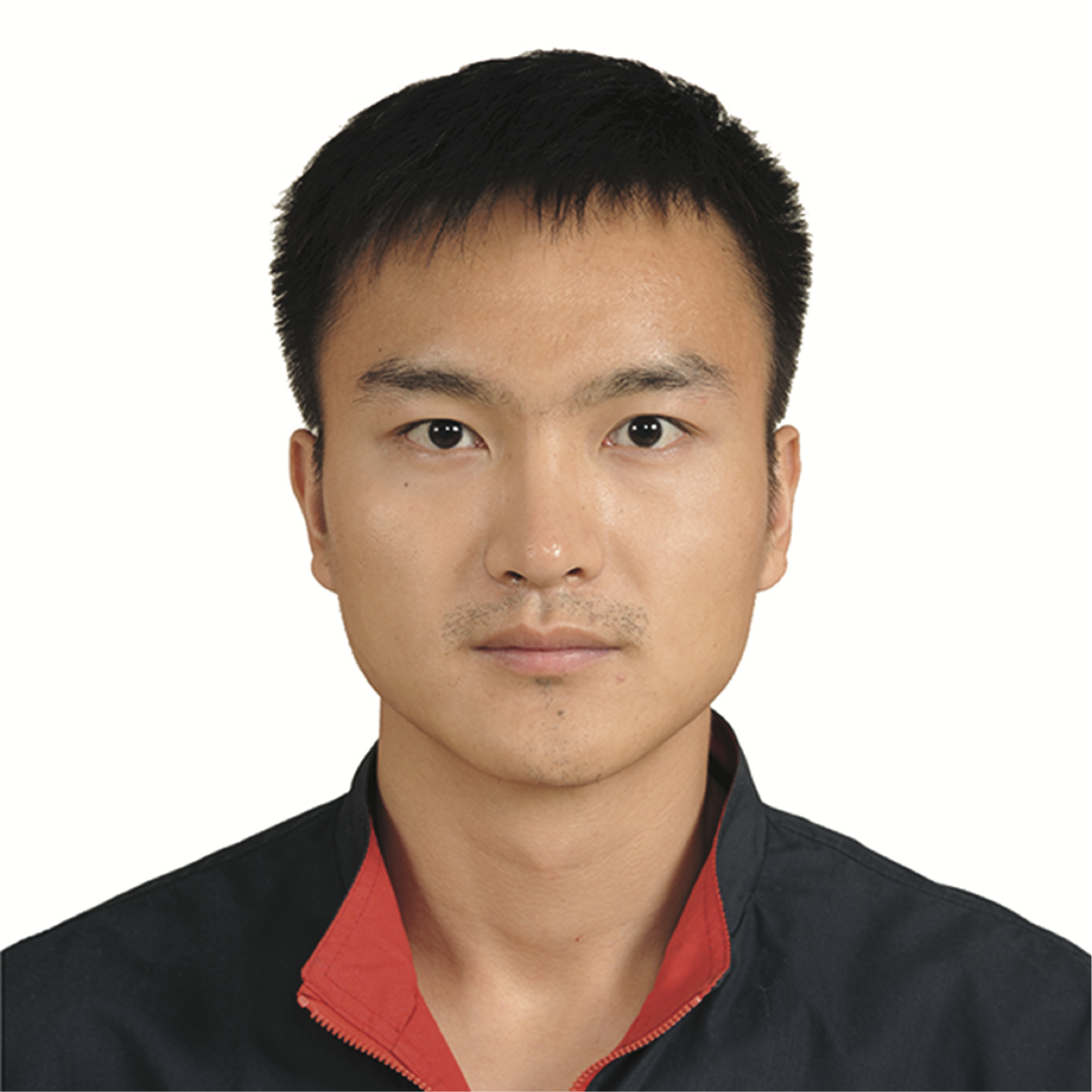}}]{Dongyang Li}
		received the M.S. and Ph.D degrees in school of electronics and information engineering, Tongji University, Shanghai, China, in 2017 and 2022, respectively. From 2019 to 2021, he was sponsored by China Scholarship Council to carry on his research at Georgia Institute of Technology. He is now an engineer with the Sino-German College of Applied Science, Tongji University; His research interest includes computational intelligence, deep learning and their applications.
	\end{IEEEbiography}
	
\end{document}